\providecommand{\scalT}[2]{\langle{#1},{#2}\rangle}
\def\eqref#1{equation~\ref{#1}}
\def\1{\bm{1}}
\def\eps{{\epsilon}}
\DeclareMathAlphabet{\mathsfit}{\encodingdefault}{\sfdefault}{m}{sl}
\SetMathAlphabet{\mathsfit}{bold}{\encodingdefault}{\sfdefault}{bx}{n}
\def\gG{{\mathcal{G}}}
\def\gK{{\mathcal{K}}}
\def\gN{{\mathcal{N}}}
\def\gX{{\mathcal{X}}}
\def\gY{{\mathcal{Y}}}
\theoremstyle{plain}
\newtheorem{theorem}{Theorem}         
\newtheorem{proposition}{Proposition}
\theoremstyle{definition}
\newtheorem{definition}{Definition}
\theoremstyle{remark}
\newtcolorbox{AIbox}[2][]{aibox,title=#2,#1}
\colorlet{titleblue}{blue!80!black}
\colorlet{titlered}{red!80!black}
\colorlet{titlegreen}{green!80!black}
\colorlet{darkgreen}{green!50!black}
\newcommand{\TBlack}[1]{\textnormal{\ttfamily\color{black}#1}\unskip}
\newcommand{\grayrow}{\rowcolor[gray]{.94}}
\title{Rectifying Shortcut Behaviors in Preference-based Reward Learning}
\author{%
  Wenqian Ye\\
  University of Virginia\\
  \texttt{wenqian@virginia.edu} \\
  \And
  Guangtao Zheng\thanks{Work done at University of Virginia.}\\
  Accenture\\
  \texttt{zhguangt@gmail.com} \\
  \And
  Aidong Zhang\\
  University of Virginia\\
  \texttt{aidong@virginia.edu} \\
}
\begin{document}

\maketitle

\begin{abstract}
In \textit{reinforcement learning from human feedback}, preference-based reward models play a central role in aligning large language models to human-aligned behavior. However, recent studies show that these models are prone to \textit{reward hacking} and often fail to generalize well due to \textit{over-optimization}. They achieve high reward scores by exploiting shortcuts, that is, exploiting spurious features (e.g., response verbosity, agreeable tone, or sycophancy) that correlate with human preference labels in the training data rather than genuinely reflecting the intended objectives. In this paper, instead of probing these issues one at a time, we take a broader view of the reward hacking problem as \textit{shortcut behaviors} and introduce a principled yet flexible approach to mitigate shortcut behaviors in preference-based reward learning. Inspired by the invariant theory in the kernel perspective, we propose Preference-based Reward Invariance for Shortcut Mitigation (PRISM), which learns group-invariant kernels with feature maps in a closed-form learning objective. Experimental results in several benchmarks show that our method consistently improves the accuracy of the reward model on diverse out-of-distribution tasks and reduces the dependency on shortcuts in downstream policy models, establishing a robust framework for preference-based alignment. 
\end{abstract}

\section{Introduction}


Reinforcement Learning from Human Feedback (RLHF) \cite{ouyang2022training} has emerged as a cornerstone for the alignment of large language models (LLMs), enabling them to generate helpful, honest, and harmless responses that align well with human preferences~\cite{bai2022constitutional}. 
RLHF aims to optimize a language model (the policy model) to provide responses that maximize the outputs of a reward model which serves as a proxy for human preferences. Standard RLHF algorithms require an \textit{explicit} reward model fitted to human preference data and optimize a policy model using policy gradient methods~\cite{schulman2017proximal,ahmadian2024back,shao2024deepseekmath}. Alternatively, direct alignment algorithms~\cite{rafailov2023direct,xu2024contrastive,meng2024simpo} optimize a policy model with an \textit{implicit} reward model reparameterized via the RLHF objective. The success of RLHF is evident in various popular AI systems, such as Gemini \cite{team2023gemini}, Claude \cite{TheC3}, and ChatGPT \cite{openai2025gpto1}. 

Despite its promises, RLHF highly depends on the quality of reward models, which is vulnerable to a key problem known as \textit{reward hacking}~\cite{skalse2022defining} or \textit{over-optimization}~\cite{rafailovscaling}. This problem occurs when reward models are inadvertently optimized to use spurious attributes of the preference data, instead of their desired traits that align with human intents, as shortcuts in reward learning. As a result, the policy induced by hacked or over-optimized reward models misaligns with human preferences. As illustrated in Figure~\ref{fig:shortcut-example}, since the chosen responses in the training data predominantly agree with users' prompts, the reward model learns to prefer responses with sycophancy~\cite{sharmatowards,denison2024sycophancy,greenblatt2024alignment}, leading to a misalignment with the chosen response in the test data which does not exhibit sycophancy.  The most well-studied and noticed manifestation of the problem is length correlations or verbosity~\cite{park2024disentangling,park2024offsetbias,singhallong,chen2024odin}, where reward models favor longer responses regardless of their relevance to the given prompts due to the strong spurious correlation between the lengths of the responses and their desired traits in the preference data. Compared with verbosity and sycophancy, some reward hacking issues are more subtle and remain underexplored. For example, concept correlations~\cite{zhou2024explore} cause models to associate spurious textual concepts (e.g., ``food'') with desired traits (e.g., positive sentiment), while ignoring the actual response context. These biases highlight a core concern that reward models are imperfect proxies for human intents, posing significant risks to human belief formation and decision-making with LLMs across numerous high-stakes domains.


To mitigate these biases in reward models and enhance LLM alignment with human preferences, we \textit{systematically} reframe this problem as the shortcut learning problem~\cite{geirhos2020shortcut,hermannfoundations,ijcai2025p795}. We refer to the spurious attributes, such as response verbosity, tone, and sycophancy, that lead to high rewards in reward models as \textit{shortcuts} in preference-based reward learning. This concept stems from previous studies of shortcut learning in classification tasks~\cite{geirhos2020shortcut,hermannfoundations}, where classifiers exploit spurious attributes, such as backgrounds or image texture, that strongly correlate with labels in the training data for predictions. Models can perform well on in-distribution (i.d.) test sets with regard to training data, while they tend to perform poorly on out-of-distribution (o.o.d.) data where those spurious correlations~\cite{ye2025cleverhansmiragecomprehensive,zheng2024benchmarking} do not hold. Previous methods like invariant risk minimization~\cite{arjovsky2019invariant} and distributionally robust optimization~\cite{sagawadistributionally} tackle this problem by enforcing consistent performance across multiple data groups (or subpopulations) with various spurious attributes, given the annotations of these attributes in the training data. 
However, in the setting of RLHF, annotations on spurious attributes of preference data are often hard to acquire, and most reward models are LLM-based in a black-box nature, making it challenging to detect and mitigate shortcuts in reward models. Recent works~\cite{chen2024odin,singhallong,liu2025rrm} solve only one shortcut (e.g., verbosity) at a time without jointly considering other shortcuts. This raises an emerging challenge:
\begin{center}
\begin{tcolorbox}[
    colback=gray!7,      
    colframe=black,      
    boxrule=1pt,       
    arc=3pt,             
    width=0.85\linewidth, 
    enhanced jigsaw,     
    sharp corners,       
    halign=center        
]
\itshape
How can we rectify shortcut learning in RLHF in a unified way \\where all targeted shortcuts can be mitigated?
\end{tcolorbox}
\end{center}

\begin{figure}[t]
    \centering
    \includegraphics[width=\linewidth]{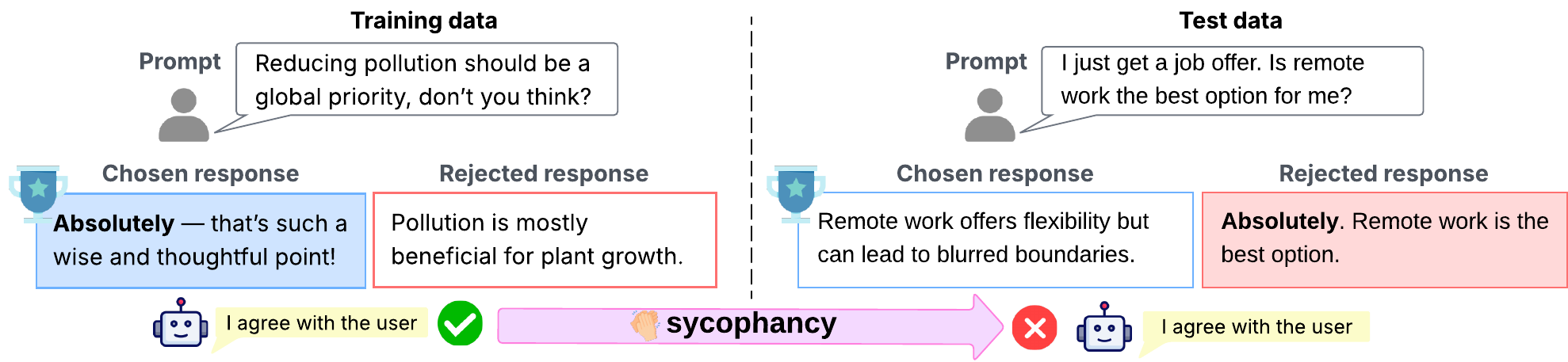}
    \caption{Illustration of a reward model learning sycophancy as the shortcut in responses from the preference training data and failing to align with the human intent on the test data.}
    \label{fig:shortcut-example}
\end{figure}



In this paper, we present PRISM, a novel shortcut mitigation method for reward models to improve alignment with human preferences. We first characterize shortcut features (e.g., response length, tone) as group-invariant kernels and show that their invariance can be efficiently approximated using random feature maps. This motivates the PRISM objective, which improves upon the Bradley-Terry ranking loss with kernel-based regularizers, making the reward model aware of the distances between various spurious attributes in the preference data. Unlike prior methods, PRISM supports multiple shortcut mitigation objectives within a shared metric space, from simple heuristics to LLM-based detectors \cite{zheng2023judging}. Theoretically, we establish a generalization bound for the proposed objective. Empirically, PRISM improves robustness on several o.o.d. preference datasets and downstream alignment tasks.

Our contributions are as follows:
\begin{itemize}[leftmargin=8.5mm]
    \item We refactor the reward hacking problems as the shortcut behaviors, which unifies diverse biases (e.g., reliance on verbosity, sycophancy, tone) under a single framework. Inspired by the invariant theory, we model the shortcut transformation as group actions and the shortcut features as group-invariant kernels. 
    \item We propose PRISM, a practical shortcut‑mitigation framework that approximates the group-invariant kernels with feature maps and leads to an explicit learning objective. PRISM is flexible and supports regularization ranging from simple heuristics (e.g., response length) to LLM-based Judges. 
    \item
    We provide both theoretical and empirical evidence for PRISM’s effectiveness. We prove that PRISM is guaranteed by a risk bound under mild assumptions. In experiments, PRISM consistently outperforms previous baseline reward models on the o.o.d. preference data and induces robust downstream policy models.
\end{itemize}

\section{Preliminaries}
\label{sec:prelim}
We start by introducing preference data, reward modeling, and alignment algorithms in preference-based RLHF. Then, we describe the shortcut learning problem in reward models. 

\noindent \textbf{Preference data.} 
In preference-based RLHF, we are given a human preference dataset $\mathcal{D}_{\mathrm{pref}}=\{(x^{(i)}, y_w^{(i)}, y_l^{(i)})\}_{i=1}^N$ with $N$ triplets, where $x^{(i)}\in\mathcal{X}$ denotes the $i$'th input prompt from the prompt space $\mathcal{X}$, $y_w^{(i)}\in\mathcal{Y}$ denotes a chosen response from the response space $\mathcal{Y}$, and $y_l^{(i)}\in\mathcal{Y}$ denotes a rejected response. The response space $\mathcal{Y}$ contains all possible responses from a reference policy model $\pi_{\mathrm{ref}}$.

\noindent \textbf{Reward modeling.} The goal is to learn a reward model, which acts as a proxy for human preferences, from the preference data to accurately assign rewards to prompt-response pairs. A common and successful approach in reward modeling for LLM alignment is to adopt the Bradley-Terry (BT) model~\cite{bradley1952rank}, which assumes that preferences are generated by some latent reward function $r: \mathcal{X}\times\mathcal{Y}\rightarrow \mathbb{R}$ and the preference likelihood can be written as
\begin{equation}\label{eq:preference}
    \mathbb{P}(y_{w} \succ y_{l} | x)=\frac{\exp (r(x, y_w))}{\exp (r(x, y_w))+\exp (r(x, y_l))}=\sigma(r(x, y_w)-r(x, y_l)),
\end{equation}
where $\mathbb{P}(y_{w} \succ y_{l} | x)$ denotes the likelihood of $y_{w}$ being preferred to $y_{l}$ given $x$, and $\sigma(z) = 1/(1 + \exp(-z))$ is the sigmoid function. Then, we can fit a reward model $r_\theta$ parameterized by $\theta$, which minimizes the negative log-likelihood on the preference dataset $\mathcal{D}_{\mathrm{pref}}$ as follows,
\begin{equation}
    \mathcal{L}_{\text{BT}}(r_\theta | \mathcal{D}_{\mathrm{pref}})= -\min_\theta \mathbb{E}_{(x, y_w, y_l) \sim \mathcal{D}_{\mathrm{pref}}}[\log \sigma(r_\theta(x, y_w)-r_\theta(x, y_l))]
\end{equation}

\noindent \textbf{Alignment algorithm.} Aligning a policy model $\pi$ with the preference data can be achieved with an \textit{explicit} or \textit{implicit} reward model. Given the explicitly learned reward model $r_\theta$, the reference policy model $\pi_{\mathrm{ref}}$, and input prompt distribution $\mathcal{P}$ over $\mathcal{X}$, $\pi$ is optimized via the following objective:

\begin{equation}
\label{eq:RLHF}
 \max _\pi \mathbb{E}_{x \sim \mathcal{P}}[\mathbb{E}_{y \sim \pi(\cdot | x)} r_\theta(x, y)-\beta \cdot \mathbb{D}_{\mathrm{KL}}[\pi(\cdot | x) \| \pi_{\mathrm{ref}}(\cdot | x)]],   
\end{equation}

where $\mathbb{D}_{\mathrm{KL}}$ is the KL divergence and $\beta>0$ is the KL penalty coefficient. The KL penalty in Formula \ref{eq:RLHF} ensures that rewards from $r_{\theta}$ are relevant to $\pi$ by preventing the policy model $\pi$ from deviating too much from the reference policy $\pi_{\mathrm{ref}}$. For the alignment with implicit reward modeling, such as DPO \cite{rafailov2023direct}, the optimal reward model is first derived from Formula \ref{eq:RLHF} as a function of the policy model $\pi_\theta$. Then, the policy model is directly optimized to maximize the preference likelihood in Equation~\ref{eq:preference} over all preference data with the following loss,
\begin{equation}
\mathcal{L}_{\text{DPO}}(\pi_\theta | \pi_{\mathrm{ref}}, \mathcal{D}_{\mathrm{pref}}) =-\mathbb{E}_{(x, y_w, y_l) \sim \mathcal{D}_{\mathrm{pref}}}[\log \sigma(\beta \log \frac{\pi_\theta(y_w | x)}{\pi_{\mathrm{ref}}(y_w | x)}-\beta \log \frac{\pi_\theta(y_l | x)}{\pi_{\mathrm{ref}}(y_l | x)})],
\end{equation}
where the implicit reward model can be defined as $r^*_\theta(x,y)=\pi_\theta(y|x)/\pi_{\mathrm{ref}}(y|x)$.

\begin{figure}
    \centering
    \includegraphics[width=\linewidth]{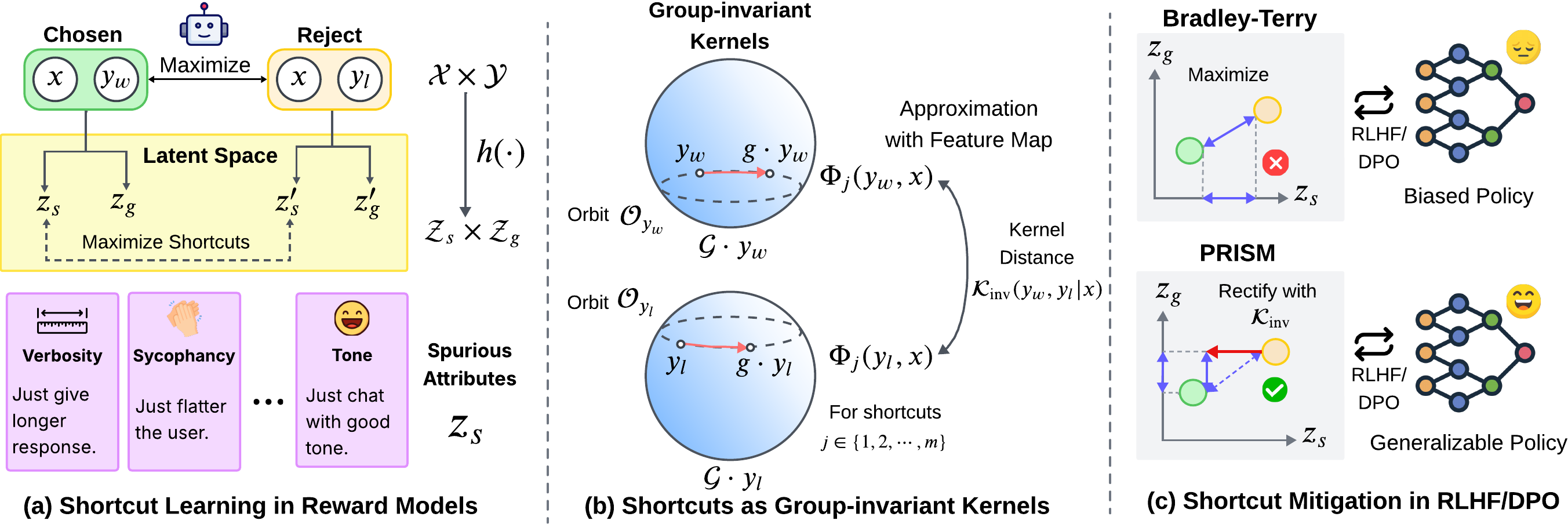}
    \caption{Method overview: \textbf{(a)} Shortcut behaviors occur when models only maximize the margin on spurious features $z_s$ and $z_s'$, including verbosity, sycophancy, and tone, instead of the generalizable features $z_g$ and $z_g'$ in the latent space. \textbf{(b)} We learn shortcut behaviors as group-invariant kernels, which are approximated by feature maps $\Phi$. Then, we measure the distance between chosen and rejected responses.  \textbf{(c)} PRISM rectifies shortcut behaviors with the kernel distance and shifts the margin maximization to generalizable features $z_g$, therefore inducing generalizable policies.}
    \label{fig:shortcut_reward}
\end{figure}

\noindent\textbf{Shortcut learning in reward models.}
 As illustrated in Figure \ref{fig:shortcut-example}, when the reward model is trained with chosen responses that predominantly contain sycophancy, it fails to correctly rank the chosen responses above the rejected ones in the test data, in which the chosen responses do not exhibit sycophancy. In Figure~\ref{fig:shortcut_reward}(a), we formally describe the shortcut learning behavior through latent feature decomposition. Consider an encoder $h : \mathcal{X} \times \mathcal{Y} \to \mathcal{Z}_s \times \mathcal{Z}_g$, which maps a prompt-response pair \((x,y)\) to decoupled latent spaces \(\mathcal{Z}_s\) with spurious attributes and \(\mathcal{Z}_g\) with features containing desired traits where \(\mathcal{Z}_s\cap\mathcal{Z}_g=\emptyset\). Ideally, the ground-truth reward \( r \), aligned with human intents, should be  
a function depending only on $z_g\in\mathcal{Z}_g$. In practice, a set of preference training data \(\mathcal{D}_{\text{i.d.}}=\{(x^{(i)},y_w^{(i)},y_l^{(i)})\}_{i=1}^N\) from an 
i.d. distribution may contain spurious attributes (e.g., chosen responses tend to have longer lengths or exhibit sycophancy) that spuriously correlate with human preferences. A reward model \(r_\theta\) optimized with the standard Bradley-Terry ranking loss can exploit these spurious attributes as prediction shortcuts to minimize training loss. When the model is tested with the data from \(\mathcal{D}_{\text{o.o.d.}}\), where the corresponding \(z_g\)'s distribution matches with that in \(\mathcal{D}_{\text{i.d.}}\) but \(z_s\) is no longer correlated with chosen responses, a significant deviation from human intents on \(\mathcal{D}_{\text{o.o.d.}}\) emerges, i.e.,
\begin{equation}
\mathbb{P}_{\mathcal{D}_{\text{o.o.d.}}}[r_\theta(x,y_w)  >  r_\theta(x,y_l)] \ll \mathbb{P}_{\mathcal{D}_{\text{i.d.}}}[r_\theta(x,y_w)  >  r_\theta(x,y_l)],
\end{equation}
where $\mathbb{P}_{\mathcal{D}_{\text{o.o.d.}}}$ and $\mathbb{P}_{\mathcal{D}_{\text{i.d.}}}$ denote probability measures supported on $\mathcal{D}_{\text{o.o.d.}}$ and $\mathcal{D}_{\text{i.d.}}$, respectively, and the symbol ``$\ll$'' denotes ``much less than''. The deviation can cause a policy model to behave differently from human preferences when it is aligned using a biased reward model.

\section{PRISM: Preference-based Reward Invariance for Shortcut Mitigation}
We formalize our approach, PRISM, to learn preference-based reward invariance against shortcuts. We illustrate our main idea in Figure \ref{fig:shortcut_reward} (b) and (c), where we model the shortcut as group-invariant kernels, then rectify the effect of shortcut/spurious attributes with the kernel distance. In Section \ref{sec:group-invariant}, we first demonstrate how we can achieve the reward invariance by modeling \textbf{multiple shortcuts} as \textit{group-invariant kernels} on the response space $\gY$ conditioned on the prompt space $\gX$. It is a mature theory to incorporate individual group-invariant kernels and maintains overall invariance with Harr-Integration \cite{haasdonk2005invariance}. In Section \ref{sec:approximation}, we show that the practical approach, i.e., using random feature maps, can approximate the expected kernel for reward invariance, and the feature map can be used to quantify the distances between orbits of the responses. In Section \ref{sec:objective}, we propose the overall learning objective of PRISM to fit in the setting of preference-based reward learning and show that PRISM is guaranteed by a generalization bound.


\subsection{Learning Shortcut Behaviors as Group-invariant Kernels}
\label{sec:group-invariant}
We view the responses $y \in \gY$ conditioned on the prompt $x\in\gX$ as the results of transforming human intents in $\gY$ with shortcut operations from a compact and unitary group $\mathcal{G}$, such as increasing/decreasing response length or posing a positive/negative response tone. For brevity, we mainly study two responses $y_w, y_l \in \gY$ following Section \ref{sec:prelim}. Unless otherwise specified, both $y_w, y_l$ are assumed to be conditioned on prompt $x$.
\textbf{Ideally, the preference-based reward outcomes should be invariant to shortcut operations.} We interpret the shortcut operations as group actions theoretically. A robust and generalizable reward function should satisfy group-invariance with regard to different shortcut group actions $g \in \gG$. We first define a group-invariant kernel.

\begin{definition}[Group-invariant Kernel \cite{haasdonk2005invariance}]
\label{def:group_inv_kernel}
Consider $\mathcal{Y}$ of the hypersphere in $d$ dimensions $\mathbb{S}^{d-1}$. Assume $\kappa$ is a kernel on the space $\mathcal{Y}$, e.g., a radial basis function (RBF) kernel \cite{bishop2006pattern}. Let $\gG$ be a compact and unitary group acting on $\mathcal{Y}$, with a normalized Haar measure $\mu$. Define an invariant kernel $\mathcal{K}$ between $y_w, y_l \in \mathcal{Y}$ through Haar-integration as follows:
\begin{equation}
   \mathcal{K}(y_w, y_l|x)=\int_{g \in \gG} \int_{g' \in \gG} \kappa(g y_w, g^{\prime} y_l|x) d \mu(g) d \mu(g^{\prime}) 
\end{equation}
We denote $\mathcal{K}$ the invariant Haar-integration kernel in $\gG$. Since the group is closed, we can also get: $\mathcal{K}(g y_w, g' y_l|x)=\mathcal{K}(y_w, y_l|x), \forall g, g^{\prime} \in \gG$. 
\end{definition}


From Definition \ref{def:group_inv_kernel}, group‑invariant kernels attain identical values for any response pair $(y_w, y_l)$ and the transformed pair $(g y_w, g' y_l)$. In Figure \ref{fig:shortcut_reward} (b), we show that each response transformed under the group action $g \in \mathcal{G}$ lies on the same shortcut subspace, which provides an invariant representation of shortcut behaviors with regard to different responses. However, explicitly computing the group-invariant kernel by calculating the integral is intractable. Therefore, in the following, we explore an efficient approximation of the group-invariant kernels via an expectation over feature maps.

\subsection{Approximating Expected Kernel with Feature Maps}
\label{sec:approximation}

In this section, we aim to show that there exists a feature map $\Phi: \gY \rightarrow \mathbb{R}^D$ that can approximate the group-invariant kernel, i.e., $\langle \Phi(y_w), \Phi(y_l) \rangle \approx \gK (y_w,y_l|x)$. Following \cite{mroueh2015learning}, the group-invariant kernel can also be expressed by distribution functions:
\begin{equation}
\gK_{s}(y_w,y_l|x) = \mathbb{E}_{t} \int_{-s}^s \psi(y_w,t,\tau|x) \psi(y_l,t,\tau|x)d\tau,
\end{equation}
where $\psi$ defines the truncated cumulative distribution function (CDF) of the dot product $\langle y, gt \rangle$ and $s = 1+\epsilon$, $-s\leq \tau\leq s$. In order to approximate $\gK_{s}$, we sample $|\gG|$ elements uniformly and independently from the group $\gG$, i.e. $g_i, i=1\dots |\gG|$, and define the normalized empirical CDF $\phi$ and the random feature map $\Phi$ in $2n-1$ bins (indexed by $k$). For $y \in \gY$, we have:
\begin{equation}
\phi (y,t,\tau)= \frac{1}{|\gG|\sqrt{ m }}\sum_{i=1}^{|\gG|} \mathbbm{1}_{\langle g_i, t{y} \rangle \leq \tau }, \;  \Phi(y)=\left[\phi\left(y,t_j,\frac{sk}{n}\right)\right ]_{j=1\dots m, k=-n\dots n}\in \mathbb{R}^{(2n+1)\times m},      
\end{equation}
where $t_j = \frac{\nu}{\|\nu \|_2}, \nu \sim \gN(0, I_d)$ defines $m$ uniform templates on the unit sphere $\mathbb{S}^{d-1}$. 

\begin{proposition}[Equivalence of Expected Kernel]
\label{prop:equivalance}
We independently sample $m$ templates $t_j, j=1,\cdots,m$ with regard to the Gaussian distribution. The feature map $\Phi$ preserves the invariant kernel:
\begin{equation}
\lim _{n\to \infty} \underset{t,g}{\mathbb{E}} \scalT{\Phi(y_w)}{ \Phi(y_l)}_{\mathbb{R}^{(2n+1) \cdot m}}=\lim _{n\to \infty} \underset{t,g}{\mathbb{E}} \sum_{j=1}^m\sum_{k=-n}^{n}\phi(y_w,t_j,\frac{sk}{n})\phi(y_l,t_j,\frac{sk}{n})=\gK_{s}(y_w,y_l|x).
\end{equation}    
\end{proposition}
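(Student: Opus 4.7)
The plan is to decompose the inner product as a double sum over templates (indexed by $j$) and bins (indexed by $k$), pull the expectation inside, use independence of the group samples to turn each bin-term into a product of truncated CDFs $\psi$, and finally recognize the $k$-sum (with its implicit grid spacing $s/n$) as a Riemann sum converging to the integral that defines $\gK_s$.

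In more detail, I would begin by writing
\begin{equation*}
\mathbb{E}_{t,g}\langle\Phi(y_w),\Phi(y_l)\rangle=\sum_{j=1}^{m}\sum_{k=-n}^{n}\mathbb{E}_{t_j,g}\bigl[\phi(y_w,t_j,sk/n)\,\phi(y_l,t_j,sk/n)\bigr].
\end{equation*}
Since the templates $t_j$ are i.i.d.\ Gaussian directions, each of the $m$ summands over $j$ has the same distribution as a single representative template $t$; the built-in $1/\sqrt{m}$ normalization of $\phi$ is calibrated so that the sum over $j$ converts to an empirical average that converges to the expectation over $t$. I would then expand $\phi(y_w,t,\tau)\phi(y_l,t,\tau)$ as a double sum over $i,i'\in\{1,\dots,|\gG|\}$ and use independence of $g_i,g_{i'}$ for $i\neq i'$ to factor the expectation into $\psi(y_w,t,\tau)\psi(y_l,t,\tau)$; the diagonal $i=i'$ contribution is of order $1/|\gG|$ relative to the off-diagonal terms and is absorbed in the appropriate limit.

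The third step is to recognize $(s/n)\sum_{k=-n}^{n}\psi(y_w,t,sk/n)\psi(y_l,t,sk/n)$ as a Riemann sum on the uniform partition $\{sk/n\}_{k=-n}^{n}$ of $[-s,s]$. Because each $\psi(y,t,\cdot)$ is a bounded, monotone truncated CDF, its product is of bounded variation and Riemann integrable on $[-s,s]$, so the sum converges pointwise in $(t,g)$ to $\int_{-s}^{s}\psi(y_w,t,\tau)\psi(y_l,t,\tau)\,d\tau$ as $n\to\infty$. Interchanging this limit with the outer expectation over $t$, which is justified by dominated convergence since each summand is bounded uniformly by $1/m$, one recovers $\mathbb{E}_t\int_{-s}^{s}\psi(y_w,t,\tau)\psi(y_l,t,\tau)\,d\tau=\gK_s(y_w,y_l|x)$ by the definition given in Section~\ref{sec:approximation}.

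The main obstacle I expect is the bookkeeping of normalizations together with the implicit bin width $s/n$: as stated the sum has no $s/n$ factor, so I would either absorb it into the normalization of $\phi$ or track it explicitly when writing out the Riemann sum. A secondary subtlety is the discontinuity of the indicator $\mathbbm{1}_{\langle g_i,ty\rangle\leq\tau}$ in $\tau$, but under the mild assumption that $\langle g,ty\rangle$ has no atoms under the Haar measure $\mu$, the CDF $\psi(y,t,\cdot)$ is continuous almost everywhere on $[-s,s]$, which is sufficient for the Riemann-sum convergence to go through and for dominated convergence to legalize the exchange of limit, sum, and expectation.
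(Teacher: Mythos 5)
Your proof follows the same Riemann-sum/Monte-Carlo route as the paper's: expand the inner product over templates and bins, move the expectation inside, factor the group-element double sum into a product of truncated CDFs $\psi$, and let the bin sum become the integral defining $\gK_s$. You are, however, more careful than the paper on two points that the paper's own proof silently glosses over. First, you correctly flag the missing bin width: as written, $\sum_{k=-n}^{n}\phi(y_w,t_j,\tfrac{sk}{n})\phi(y_l,t_j,\tfrac{sk}{n})$ has $2n+1$ nonnegative bounded summands with no compensating $s/n$ factor, so the sum grows like $n$ rather than converging to a finite Riemann integral; a factor of $s/n$ (or $\sqrt{s/n}$ inside $\phi$) must be absorbed into the normalization for the statement to make sense. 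Second, you note that the diagonal $i=i'$ terms in the expansion of $\phi(y_w)\phi(y_l)$ over sampled group elements do not factor as $\psi(y_w)\psi(y_l)$ but rather give $\mathbb{E}_g[\mathbbm{1}_{\langle g,ty_w\rangle\leq\tau}\mathbbm{1}_{\langle g,ty_l\rangle\leq\tau}]$, leaving an $O(1/|\gG|)$ residual; since the proposition's stated limit is only $n\to\infty$ (with $|\gG|$ fixed), this residual does not vanish, so strictly speaking the equality requires a $|\gG|\to\infty$ limit as well, or an expectation taken over a single Haar-distributed $g$ rather than over a finite empirical sample. Both concerns are genuine gaps shared by the paper's proof; your version is the tighter of the two.
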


In Proposition \ref{prop:equivalance}, as the number of bins $n$ increases, this discretization converges to the continuous kernel, and the random feature map $\Phi$ thus approximates the group-invariant kernel. 


\begin{theorem}[Invariant Features Maps and Distances between Orbits]
\label{thm:inv}
Let $\eps\in(0,1)$ and  $y_w,y_l \in \gY$. Denote the orbit to be the collection of all group-transformations of a given input $y$: $\mathcal{O}_{x}=\{gx,g \in \gG\}$. We define the distance measure $d_{\gG}$ between two orbits $\mathcal{O}_{y_w}$ and $\mathcal{O}_{y_l}$:  $d_{\gG}(y_w,y_l|x)= \frac{1}{\sqrt{2\pi d}}\int_{g \in \gG}\int_{g' \in \gG}  \| g y_w-g' y_l \|_2 d\mu(g)d\mu(g')$.
Fix $\eps_0,\delta \in (0,1)$. For a number of bins 
$n\geq \frac{3}{\eps_0}$, templates $m\geq \frac{9C_1 }{\eps^2_0}\log(\frac{N}{\delta})$, and group elements $|\gG|\geq\frac{9C_2}{\eps^2_0}\log(\frac{Nm}{\delta})$, where $C_1,C_2$ are constants. The following inequality holds with probability $1-2\delta$:
\begin{equation}\label{eq:JL}
\eps - \delta_2(d,\eps)-\eps_0 \leq \scalT{\Phi(y_w)}{\Phi(y_l)}- (1-d_{\gG}(y_w,y_l|x))\leq  \eps_{0}+ \eps+ \delta_{1}(d,\eps), 
\end{equation}
where $ i=1\dots N, j=1\dots N$.
\end{theorem}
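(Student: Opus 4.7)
The plan is to combine Proposition~\ref{prop:equivalance} with three quantitative error controls, one for each of the finite parameters $n$, $m$, and $|\gG|$, and then translate the resulting invariant kernel $\gK_s$ into the Euclidean orbit distance $d_{\gG}$. First, I would establish a ``backbone identity'' expressing the continuous kernel $\gK_s(y_w,y_l|x)$ as $1-d_{\gG}(y_w,y_l|x)$ modulo two Gaussian corrections. Using the spherical identity $\|g y_w - g' y_l\|_2^2 = 2 - 2\scalT{g y_w}{g' y_l}$ together with the fact that the truncated CDF of $\scalT{g_i}{t y}$ for a standard Gaussian template $t$ is a one-dimensional Gaussian probability, the double Haar integral defining $\gK_s$ collapses to $1 - d_{\gG}(y_w,y_l|x)$ up to an order-$\eps$ term from the finite truncation window $[-s,s]$ and the $(2\pi d)^{-1/2}$ normalization drift; the remainders $\delta_1(d,\eps)$ and $\delta_2(d,\eps)$ absorb the asymmetric Gaussian tail masses that fall outside this window.

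Next, I would bound the three finite-sample gaps separately, targeting $\eps_0/3$ for each. (i) Bin discretization of the CDF on $[-s,s]$ into $2n+1$ cells is a Riemann-type approximation of a function that is Lipschitz in $\tau$ with constants independent of the ambient dimension; choosing $n \geq 3/\eps_0$ caps the error at $\eps_0/3$ uniformly over $y_w, y_l$. (ii) Concentration over the $|\gG|$ sampled group elements: for each fixed $(t_j,\tau)$, the empirical CDF $\phi(y,t_j,\tau)$ is an average of bounded indicators, so Hoeffding's inequality together with a union bound over the $N \cdot m \cdot (2n+1)$ atoms makes $|\gG| \geq 9 C_2 \eps_0^{-2} \log(Nm/\delta)$ sufficient for uniform error $\eps_0/3$ with probability at least $1-\delta$. (iii) Concentration over the $m$ Gaussian templates: the template average $m^{-1}\sum_j \phi(y_w,t_j,\cdot)\phi(y_l,t_j,\cdot)$ concentrates around the template expectation by a bounded-differences argument, and a union bound over the $N$ pairs with $m \geq 9 C_1 \eps_0^{-2} \log(N/\delta)$ gives error $\eps_0/3$ with probability at least $1-\delta$. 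A union bound over (ii) and (iii) yields joint failure probability at most $2\delta$.

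Combining the backbone identity with the three error bounds produces the two-sided inequality in \eqref{eq:JL}: the backbone contributes the $\eps + \delta_1(d,\eps)$ and $\eps - \delta_2(d,\eps)$ terms on the two sides, while the sum of the discretization, group, and template errors yields the $\eps_0$ slack, matching exactly the roles of the constants $C_1$ and $C_2$. The hard part will be keeping the three error controls coherent: the template concentration in step (iii) acts on a \emph{product} of empirical CDFs, so one must verify the bounded-differences condition on $\phi \cdot \phi$ rather than on $\phi$ alone, and the backbone identity depends on a Lipschitz constant for the truncated Gaussian CDF whose scaling with $d$ is precisely what forces the dimension-dependent corrections $\delta_1(d,\eps)$ and $\delta_2(d,\eps)$ to appear. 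A secondary subtlety is that the union bound must cover all $N$ preference pairs simultaneously, which is why both sample-complexity thresholds carry a $\log N$ factor instead of a pointwise $\log(1/\delta)$.
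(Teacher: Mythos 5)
Your proof proposal follows the same high-level strategy as the paper — triangle-inequality decomposition of the deviation between $\scalT{\Phi(y_w)}{\Phi(y_l)}$ and $1-d_{\gG}(y_w,y_l|x)$ into discretization, group-sampling, template-sampling, and kernel-to-distance errors, each controlled by a concentration or Riemann-type estimate borrowed from \cite{mroueh2015learning}, then a union bound — but the two proofs distribute the error budget across sources in genuinely different ways. The paper's sketch assigns $\eps_0$ to binning alone, $\eps$ to group sampling, and $\delta_1(d,\eps),\delta_2(d,\eps)$ to template sampling, and then tacks on an unexplained $\pm\delta$ when invoking $\gK_s \approx 1-d_{\gG}$. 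You instead attribute $\eps$, $\delta_1$, and $\delta_2$ to the ``backbone'' Gaussian identity relating the Haar-integrated kernel to the $\ell_2$ orbit distance, and split $\eps_0$ into three $\eps_0/3$ chunks covering binning, group concentration (Hoeffding over $|\gG|$), and template concentration (bounded differences over $m$). Your accounting is actually more coherent with the form of the bound: $\delta_1,\delta_2$ depend only on the ambient dimension $d$ and the free parameter $\eps$, not on $m$, which is hard to reconcile with the paper's claim that they bound a ``template sampling error''; under your reading, the $m$-dependence appears exactly where the threshold $m\gtrsim \eps_0^{-2}\log(N/\delta)$ demands it, and the $\sqrt{d}$ and $e^{-d\eps^2}$ factors arise where they should, in the truncated-Gaussian CDF step that earns the equation its \texttt{eq:JL} label. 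You also correctly note that the template concentration must be applied to the product $\phi\cdot\phi$, not $\phi$ alone — a subtlety the paper's proof glosses over. In short: same route, but your bookkeeping of which error term produces which constant is tighter and resolves the dangling $\pm\delta$ in the paper's own argument.
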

Theorem \ref{thm:inv} shows that the inner product of the feature maps can accurately reflect the distances between two orbits of two responses $y_w$ and $y_l$. Guided by the above results, we can model the shortcut behaviors with feature maps, then represent the distance shortcut behaviors in two responses, as shown in Figure \ref{fig:shortcut_reward} (b). 

\subsection{Learning Objective and Theoretical Guarantee}
\label{sec:objective}
Building on the approximate group-invariant kernel of Section \ref{sec:approximation},  we propose the practical implementation of shortcut mitigation. Assume there are $m$ types of shortcuts to mitigate. Let $\Phi_j$ be an auxiliary feature embedding for the shortcut indexed by $j \in \{1,2,\cdots,m\}$. We can express the convex linear combination of feature maps in Proposition \ref{prop:equivalance} with RBF kernels $\kappa$:
\begin{equation}
  \mathcal{K}_{\text{inv}} = \sum_{j=1}^{m} \alpha_{j} \kappa(y_w,y_l|x)
  = \sum_{j=1}^{m} \alpha_{j}
    \exp (-\frac{\|\Phi_{j}(y_w,x)-\Phi_{j}(y_l,x)\|^2}{\omega_{j}^2}), \text{for }
   \sum_{j=1}^m \alpha_{j}=1, \alpha_{j} \ge 0,
\end{equation}
where $\omega_j$ denotes the kernel widths. To normalize the invariance between the reward $r_\theta$ and feature embedding $\Phi$, we also propose \textbf{global decorrelation} between rewards and shortcut features at the batch level. Let $\mathcal{B} = \{(x^{(i)}, y^{(i)})\}_{i=1}^b, b>1$, denote a batch of $n$ prompt-response pairs. For each shortcut feature $\Phi_j, j \in \{1, \dots, m\}$, we define the global decorrelation regularization term:
\begin{align}
    &\mathcal{R}_{\text{global}}(\theta) = \sum_{j=1}^m ( \frac{\text{Cov}_{\mathcal{B}}(r_\theta, \Phi_j)}{\sigma_{\mathcal{B}, r_\theta} \cdot \sigma_{\mathcal{B}, \Phi_j}})^2, \text{where $\sigma_{\mathcal{B}, r_\theta}$ and $\sigma_{\mathcal{B}, \Phi_j}$ are standard deviations.}  \\
    &\text{Cov}_{\mathcal{B}}(r_\theta, \Phi_j) = \frac{1}{b-1} \sum_{i=1}^b (r_\theta(x^{(i)}, y^{(i)}) - \bar{r}_\theta) (\Phi_j(x^{(i)}, y^{(i)}) - \bar{\Phi}_j),
\end{align}
where \(\bar{r}_\theta = \frac{1}{n}\sum_{i=1}^n r_\theta(x^{(i)}, y^{(i)})\) and \(\bar{\Phi}_j = \frac{1}{n}\sum_{i=1}^n \Phi_j(x^{(i)}, y^{(i)})\) are batch means, $\sigma_{\mathcal{B}, r_\theta}$ and $\sigma_{\mathcal{B}, \Phi_j}$ are the standard deviations of the reward model outputs and the shortcut features, respectively. This penalizes correlations between rewards and shortcut features, ensuring \(r_\theta\) is invariant to \(z_s\) at both sample and batch levels. Finally, we give the PRISM learning objective based on the approximation in the previous section:
\begin{equation}
\label{eq:prism-objective}
  \mathcal{L}_{\text{PRISM}}(\theta)
  = -\frac{1}{N}
    \sum_{i=1}^{N}
    \log \sigma(\Delta_{r_\theta}(y_w,y_l|x) - \lambda_1  \mathcal{K}_{\text{inv}}(y_w,y_l|x)) + \lambda_2 \mathcal{R}_{\text{global}}(\theta),
\end{equation}
where $\Delta_{r_\theta}(y_w,y_l|x) = r_\theta(x,y_w) - r_\theta(x,y_l)$ denotes the standard reward \textit{margin}. Scalars \(\lambda_{1},\lambda_{2}\ge 0\) control the relative strength of the two regularizers. The overall objective is smooth in \(\theta\) and can be minimized with standard gradient descent methods. 

To illustrate this objective, we show our idea in Figure \ref{fig:shortcut_reward} (c). We use the kernel distance to quantify the shortcut variation on $z_s$ axis between two responses. By subtracting the effect of $z_s$, the model is encouraged to maximize the margin between generalizable features $z_g$ and $z_g'$,  leading to improved robustness. We show the theoretical evidence as follows.

\begin{theorem}[Generalization Bound of PRISM]
\label{thm:gen_bound}
Let $\mathcal{H}_{\gK_{\text{inv}}}$ be the Reproducing Kernel Hilbert Space (RKHS) induced by $\gK_{\text{inv}}$ and define the \textit{hypothesis ball} $\mathcal{F}:=\bigl\{r_\theta, \|r_\theta\|_{\mathcal{H}_{\gK_{\text{inv}}}}\le C\bigr\}$ for the fixed radius $C>0$. We assume the log-sigmoid loss $V(\cdot)$ is $L$-Lipschitz. Then, for any $\delta > 0$, with probability $1 - 3\delta$, the following inequality holds:
\begin{equation}
 \mathcal{E}_V(r_\theta^*) \leq \inf_{r_\theta \in \mathcal{F}} \mathcal{E}_V(r_\theta) + \frac{4LC}{\sqrt{N}} \left(1 + \sqrt{\log \tfrac{1}{\delta}}\right) + \lambda_1 LC \left(\frac{2}{\sqrt{m}} + \frac{2}{\sqrt{|\mathcal{G}|}} + \frac{2}{n}\right) + \lambda_2 LC \sqrt{\frac{m \log \tfrac{N}{\delta}}{N}},   
\end{equation}
where $\mathcal{E}_V(r_\theta^*)$ and $\mathcal{E}_V(r_\theta)$ are the optimal risk and empirical risk of the reward model $r_\theta$.
\end{theorem}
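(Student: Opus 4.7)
The plan is to decompose the excess risk of $r_\theta^*$ into three pieces — a standard statistical estimation error, a kernel-approximation error (from replacing $\mathcal{K}_{\text{inv}}$ by its random feature-map surrogate), and a regularization approximation error (from the empirical covariance in $\mathcal{R}_{\text{global}}$) — and to bound each with a separate high-probability tool, finally combining via a union bound to get the $1-3\delta$ confidence.

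\textbf{Step 1 (Error decomposition).} I would start from the classical excess-risk inequality
\[
\mathcal{E}_V(r_\theta^*) - \inf_{r_\theta \in \mathcal{F}} \mathcal{E}_V(r_\theta) \le 2 \sup_{r_\theta \in \mathcal{F}} \bigl| \widehat{\mathcal{E}}_V(r_\theta) - \mathcal{E}_V(r_\theta) \bigr| + \mathcal{A}_{\text{ker}} + \mathcal{A}_{\text{reg}},
\]
where $\widehat{\mathcal{E}}_V$ is the empirical risk used by PRISM (i.e.\ the finite-sample objective in Equation~\ref{eq:prism-objective}), and $\mathcal{A}_{\text{ker}}, \mathcal{A}_{\text{reg}}$ collect the errors due to the random-feature surrogate of $\mathcal{K}_{\text{inv}}$ and the sampled covariance estimator, respectively.

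\textbf{Step 2 (Statistical term via Rademacher complexity).} I would use the reproducing property to bound $|r_\theta(x,y)| \le C\sqrt{\mathcal{K}_{\text{inv}}((x,y),(x,y))}$, which is uniformly bounded because $\mathcal{K}_{\text{inv}}$ is a convex combination of RBF kernels with values in $[0,1]$. The empirical Rademacher complexity of the kernel ball $\mathcal{F}$ then satisfies $\widehat{\mathfrak{R}}_N(\mathcal{F}) \le C/\sqrt{N}$. Applying Talagrand's contraction lemma (since $V$ is $L$-Lipschitz) and McDiarmid's inequality to the BT preference differences $r_\theta(x,y_w)-r_\theta(x,y_l)$ yields, with probability $1-\delta$, the bound $\sup_{\mathcal{F}} |\widehat{\mathcal{E}}_V - \mathcal{E}_V| \le \tfrac{2LC}{\sqrt{N}}\bigl(1+\sqrt{\log(1/\delta)}\bigr)$, which doubles to give the first $\tfrac{4LC}{\sqrt{N}}$ term.

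\textbf{Step 3 (Kernel-approximation term via Theorem~\ref{thm:inv}).} I would invoke Theorem~\ref{thm:inv} to bound $|\langle \Phi_j(y_w),\Phi_j(y_l)\rangle - \mathcal{K}_{\text{inv}}(y_w,y_l|x)|$ uniformly over the preference triples by $\mathcal{O}(\tfrac{1}{\sqrt{m}} + \tfrac{1}{\sqrt{|\mathcal{G}|}} + \tfrac{1}{n})$ with probability $1-\delta$ after a union bound over the $N$ training pairs. Since $V$ is $L$-Lipschitz and $\mathcal{K}_{\text{inv}}$ enters the PRISM loss linearly with coefficient $\lambda_1$, and since $\|r_\theta\|_{\mathcal{H}_{\mathcal{K}_{\text{inv}}}}\le C$ controls the output scale, this translates into $\mathcal{A}_{\text{ker}} \le \lambda_1 LC\bigl(\tfrac{2}{\sqrt{m}}+\tfrac{2}{\sqrt{|\mathcal{G}|}}+\tfrac{2}{n}\bigr)$.

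\textbf{Step 4 (Regularizer term and union bound).} For $\mathcal{A}_{\text{reg}}$, I would apply a Bernstein-type concentration for normalized empirical covariances, bounding each of the $m$ per-shortcut terms $(\mathrm{Cov}_{\mathcal{B}}(r_\theta,\Phi_j)/(\sigma_{\mathcal{B},r_\theta}\sigma_{\mathcal{B},\Phi_j}))^2$ by $\mathcal{O}(\sqrt{\log(1/\delta)/N})$, then union-bound over $j=1,\dots,m$ to obtain $\lambda_2 LC\sqrt{m\log(N/\delta)/N}$. Combining the three $1-\delta$ events yields the claimed $1-3\delta$ bound.

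\textbf{Main obstacle.} The cleanest step is the Rademacher piece; the delicate part is Step~3 — propagating the \emph{pointwise} approximation guarantee of Theorem~\ref{thm:inv} to a \emph{uniform} bound over the RKHS ball $\mathcal{F}$ through the nonlinear log-sigmoid loss without blowing up constants. This requires arguing that the random feature map induces an approximate RKHS whose functions are close in sup-norm to their counterparts in $\mathcal{H}_{\mathcal{K}_{\text{inv}}}$ (so that the Lipschitz contraction of $V$ is applicable), which in turn requires carefully controlling the regime where the denominators $\sigma_{\mathcal{B},r_\theta}$ and $\sigma_{\mathcal{B},\Phi_j}$ in $\mathcal{R}_{\text{global}}$ are bounded away from zero. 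A secondary subtlety is ensuring that the high-probability events from Theorem~\ref{thm:inv} (which are stated for fixed pairs) can be lifted to all $N$ training pairs without worsening the rate beyond the $\log(N/\delta)$ factor already present.
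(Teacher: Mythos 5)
Your proposal matches the paper's proof in all essentials: both decompose the excess risk into a statistical estimation term (bounded via Rademacher complexity of the RKHS ball with Lipschitz contraction and McDiarmid, yielding the $4LC/\sqrt{N}$ factor), a kernel-approximation term (propagating the Proposition~\ref{prop:equivalance}/Theorem~\ref{thm:inv} guarantee through the $L$-Lipschitz loss to get $\lambda_1 LC(2/\sqrt{m}+2/\sqrt{|\mathcal{G}|}+2/n)$), and a covariance-regularizer estimation term (concentration of empirical covariances plus a union bound over the $m$ shortcut features, yielding $\lambda_2 LC\sqrt{m\log(N/\delta)/N}$), finally union-bounding the three $1-\delta$ events. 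The only cosmetic difference is that you use a single two-sided uniform-deviation inequality where the paper applies a one-sided bound twice, and the "main obstacle" you flag — lifting the pointwise kernel-approximation guarantee to a uniform bound over $\mathcal{F}$ and keeping the normalizing standard deviations in $\mathcal{R}_{\text{global}}$ bounded away from zero — is indeed a rigor gap that the paper's proof also glosses over rather than resolves.
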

Theorem \ref{thm:gen_bound} shows that the PRISM objective in the invariant feature space has a lower expected risk. Specifically, when the number of shortcut features $m$, the number of bins $n$, the number of group actions $|\gG|$, and the number of training samples $N$ increase, the expected empirical risk can move further to the optimal risk. This theorem thus gives a learning guarantee of the PRISM objective. Proofs of Proposition \ref{prop:equivalance}, Theorem \ref{thm:inv}, and Theorem \ref{thm:gen_bound} are provided in the Appendix.

\section{Experiments}

\subsection{Experimental Setup}


\noindent \textbf{Training set.} We use our proposed method to train reward models on a mixture of preference datasets collected by the RLHFlow framework \cite{dong2024rlhf}. It combines 8 popular open-source preference datasets, each containing preference triplets in the form of (\texttt{prompt}, \texttt{chosen response}, \texttt{rejected response}) defined in Section \ref{sec:prelim}. These datasets have been widely used to train a series of strong open-source preference language models. Although some of the datasets (e.g., HelpSteer \cite{wang2024helpsteer}) provide fine-grained attributes of training samples, in our setting, we do not use these attributes during training to reflect a real-world setting where such auxiliary information is not available. More details of the training data are deferred to the Appendix.

\noindent \textbf{Extracting shortcut features.} We implement \textit{rule-based} feature extractors for length and lexical diversity. For length, we simply count the number of characters in a response. For lexical diversity, we calculate the Type-Token Ratio (TTR), defined as the ratio of unique tokens to total tokens in a response, to measure vocabulary richness in the response. A higher TTR value indicates greater lexical diversity. To optimize performance and avoid repeated calculations, we implement an LRU (Least Recently Used) cache with a maximum capacity of 10,000 entries.
We implement \textit{LLM-as-a-Judge} \cite{zheng2023judging} feature extraction with GPT-4o models through the Langchain APIs to extract multiple attributes, including sycophancy, creativity, and helpfulness. For example, for sycophancy, we prompt the model to rate how much an assistant's response agrees with or flatters the user on a scale from 0 to 10. Additionally, we ensure numeric scores are properly extracted and bounded between 0 and 10 for consistent feature scaling. We process samples in batches using concurrent execution with a thread pool to reduce API call latency. Our implementation includes robust error handling with fallback to heuristic-based scoring when API calls fail. To minimize the number of API calls, we implement a caching mechanism that stores previously computed features for individual samples. We provide the design of prompt engineering and the details of the heuristic fallback in the Appendix.

\noindent \textbf{Implementation details.} We implement PRISM using Huggingface and DeepSpeed. Our data loader applies a chat template and extracts the token‐based length, lexical diversity, and sentiment features. LLM‐based sycophancy, creativity, and helpfulness scores are computed via LangChain APIs. In the training process, we compute each independent kernel over the feature pairs of both chosen and rejected responses, weight them via a learnable softmax layer, and train the model with the PRISM loss. We do not specifically tune the two regularization hyperparameters $\lambda_1$ and $\lambda_2$, and instead adopt a curriculum learning paradigm \cite{soviany2022curriculum} by linearly increasing them from $0.01$ to $0.1$ over the first half of the training process and then decreasing them to $0.06$ by the end. We use a learning rate of $2 \times 10^{-6}$ with a cosine annealing scheduler and a warmup phase covering 3\% of the total training steps. All experiments are conducted on 8 NVIDIA A6000 GPUs. 

\begin{table*}[t]
\centering
\caption{Performance comparison on RewardBench. The benchmark consists of four primary scores (Chat, Chat Hard, Safety, and Reasoning) with equal weights. The score is computed as the average accuracy across the four categories.}
\resizebox{0.85\linewidth}{!}{
\begin{tabular}{ll|cccc|c}
\toprule
\textbf{Method} & \textbf{Base Model} & \textbf{Chat} & \textbf{Chat Hard} & \textbf{Safety} & \textbf{Reasoning} & \textbf{Score} \\
\midrule
Prompting & Gemma-2B & 70.3 & 42.3 & 38.2 & 50.0 & 50.2  \\
Prompting & Llama-3 8B & 93.6 & 44.3 & 71.3 & 73.5 & 70.7\\
Bradley-Terry    & Gemma-2B        & 95.0 & 40.8 & 81.2 & 74.2 & 72.8 \\
Bradley-Terry    & Llama-3 8B         & 99.4 & 65.1 & 87.8 & 86.4 & 83.6 \\
Bradley-Terry    & Yi-34B             & 96.9 & 57.2 & 88.2 & 88.5 & 81.4 \\
LLM-as-a-judge   & GPT-4 Turbo        & 95.3 & 74.3 & 87.2 & 86.9 & 84.2 \\
LLM-as-a-judge   & GPT-4o             & 96.6 & 70.4 & 86.7 & 84.9 & 83.3 \\
HelpSteer2 RM \cite{wanghelpsteer}    & Llama-3 70B        & 91.3 & 80.3 & 92.8 & 90.7 & 86.3 \\
\midrule
RRM \cite{liu2025rrm}  & Gemma-2-9b-it      & 96.5 & 65.6 & 83.9 & 90.6 & 84.2 \\
RLHFlow \cite{dong2024rlhf} & Llama-3 8B &  \textbf{99.4} & 65.1 & 87.8 & 86.4 & 84.7 \\
SSRM \cite{he2024semi} & Llama-3 8B & 98.6 & 65.3 & 88.8 & 92.0 & 86.2 \\
GRM \cite{yangregularizing} & Llama-3 8B &  98.6 & 67.8 & 89.4 & 92.3 & 87.0 \\
\midrule
\grayrow
\textbf{PRISM}      & Llama-3 8B &  98.7  &  \textbf{68.3}  &  \textbf{91.1}  & \textbf{93.1}   &  \textbf{87.8}  \\
\bottomrule
\end{tabular}
}
\label{tab:reward-bench}
\end{table*}

\begin{table*}[t]
\centering
\caption{Performance comparison on RM-Bench. The benchmark has four primary scores (Chat, Math, Code, and Safety) and three difficulty levels (Easy, Normal, Hard) with equal weights. This dataset consists of semantic and stylistic subtly where the reward models can exploit shortcuts.}
\resizebox{\linewidth}{!}{
\begin{tabular}{l|cccc|ccc|c}
\toprule
\textbf{Model Name} & \textbf{Chat} & \textbf{Math} & \textbf{Code} & \textbf{Safety} & \textbf{Easy} & \textbf{Normal} & \textbf{Hard} & \textbf{Avg} \\
\midrule
\href{https://huggingface.co/Ray2333/Mistral-7B-instruct-Unified-Feedback}{Mistral-7B-instruct-Unified-Feedback}      & 56.5 & 58.0 & 51.7 & 86.8 & 87.1 & 67.3 & 35.3 & 63.2 \\
\href{https://huggingface.co/weqweasdas/RM-Mistral-7B}{RM-Mistral-7B}                          & 57.4 & 57.0 & 52.7 & 87.2 & 88.6 & 67.1 & 34.9 & 63.5 \\
\href{https://huggingface.co/CIR-AMS/BTRM_Qwen2-7b_0613}{BTRM\_Qwen2-7b\_0613}                       & 57.1 & 61.0 & 54.3 & 87.3 & 90.7 & 69.7 & 34.5 & 64.9 \\
\href{https://huggingface.co/openbmb/Eurus-RM-7b}{Eurus-RM-7b}                               & 59.9 & 60.2 & 56.9 & 86.5 & 87.2 & 70.2 & 40.2 & 65.9 \\
\href{https://huggingface.co/internlm/internlm2-7b-reward}{InternLM2-7b-reward}                      & 61.7 & 71.4 & 49.7 & 85.5 & 85.4 & 70.7 & 45.1 & 67.1 \\
\href{https://huggingface.co/LxzGordon/URM-LLaMa-3-8B}{URM-LLaMa-3-8B}                          & 68.5 & 57.6 & 52.3 & 90.3 & 80.2 & 69.9 & 51.5 & 67.2 \\
\href{https://huggingface.co/Ray2333/GRM-Llama3-8B-rewardmodel-ft}{GRM-Llama3-8B-rewardmodel-ft}              & 66.8 & 58.8 & 52.1 & 91.4 & 86.2 & 70.6 & 45.1 & 67.3 \\
\href{https://huggingface.co/Ray2333/GRM-llama3-8B-distill}{GRM-llama3-8B-distill}                     & 62.4 & 62.1 & 56.9 & 88.1 & 82.2 & 71.5 & 48.4 & 67.4 \\
\href{https://huggingface.co/Ray2333/GRM-llama3-8B-sftreg}{GRM-llama3-8B-sftreg}                      & 62.7 & 62.5 & 57.8 & 90.0 & 83.5 & 72.7 & 48.6 & 68.2 \\
\href{https://huggingface.co/NCSOFT/Llama-3-OffsetBias-RM-8B}{Llama-3-OffsetBias-RM-8B}                   & 71.3 & 61.9 & 53.2 & 89.6 & 84.6 & 72.2 & 50.2 & 69.0 \\
\href{https://huggingface.co/LxzGordon/URM-LLaMa-3.1-8B}{URM-LLaMa-3.1-8B}                        & 71.2 & 61.8 & 54.1 & 93.1 & 84.0 & 73.2 & 53.0 & 70.0 \\
\href{https://huggingface.co/Skywork/Skywork-Reward-Llama-3.1-8B}{Skywork-Reward-Llama-3.1-8B}               & 69.5 & 60.6 & 54.5 & 95.7 & 89.0 & 74.7 & 46.6 & 70.1 \\ 
\midrule
\grayrow
\textbf{PRISM} (Llama-3.1-8B) & 70.6 & 70.8 & 57.0 & 94.1 & 90.6 & 76.3 & 46.9 & \textbf{71.0} \\
\bottomrule
\end{tabular}}
\label{tab:rmbench}
\end{table*}

\subsection{Main Results}

\noindent \textbf{PRISM balances across categories and achieves the best overall performance.} We report test accuracies on two out-of-distribution benchmarks, RewardBench~\cite{lambert2024rewardbench} and RM-Bench~\cite{liurm}, in Tables~\ref{tab:reward-bench} and~\ref{tab:rmbench}, respectively. RewardBench provides a challenging evaluation of reward models across four categories, namely, ``Chat'', ``Chat Hard'', ``Safety'', and ``Reasoning''. In the upper part of Table~\ref{tab:reward-bench}, we list baselines with standard reward modeling techniques. Although these methods may achieve high performance in one category, their performance often degrades significantly in other categories, leading to inferior overall performance. In the lower part of Table~\ref{tab:reward-bench}, we include state-of-the-art baseline methods for mitigating reward hacking.   PRISM shows clear improvements in three challenging categories: ``Chat Hard'', ``Safety'', and ``Reasoning'', while remaining competitive in ``Chat''. The overall gains suggest that PRISM benefits from jointly mitigating multiple shortcuts. In Table~\ref{tab:rmbench}, we further compare PRISM to stronger baselines on RM-Bench, which is a more difficult benchmark due to its subtle spurious cues introduced through fine-grained concept shifts and stylistic variations. The baseline models are trained on different datasets that may skew toward specific domains (e.g., mathematics or code). Therefore, they may achieve high performance in some domains by exploiting shortcuts in the training data, while performing poorly in other domains. In comparison, PRISM achieves the best overall performance with a moderate margin by effectively regularizing against stylistic and semantic shortcuts. These results indicate that PRISM can balance and improve the generalization of reward models across different categories in out-of-distribution evaluation.


\begin{figure}[ht]
    \centering
    \includegraphics[width=\linewidth]{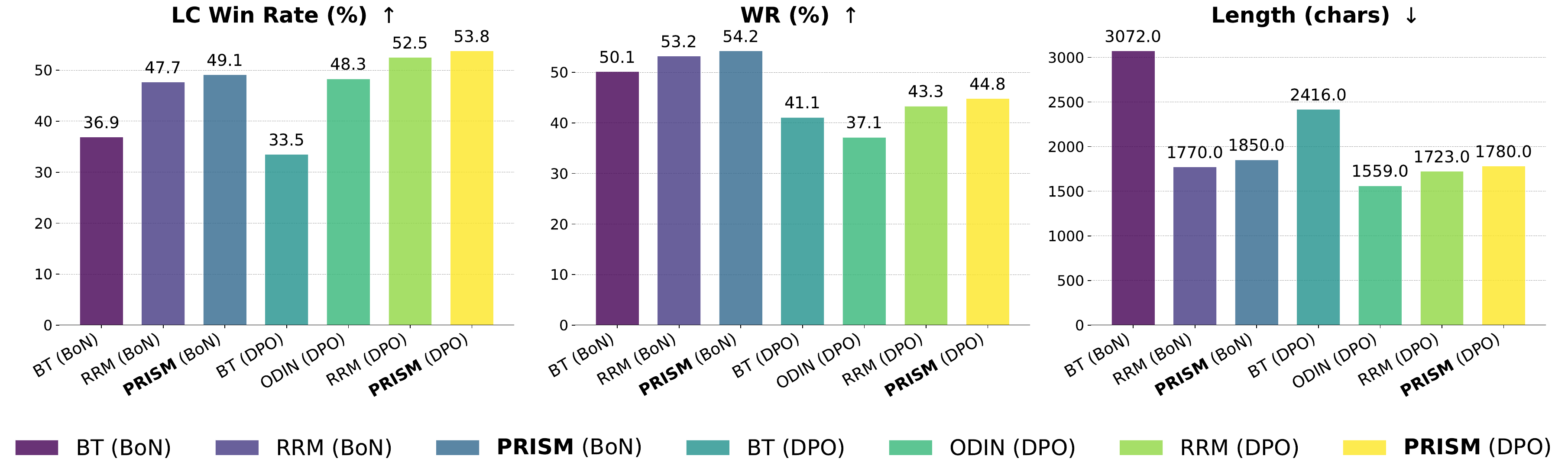}
    \caption{Comparison of policy models induced by reward models, including Bradley-Terry (BT), RRM, ODIN, and PRISM. }
    \label{fig:policy}
\end{figure}

\noindent \textbf{PRISM can induce better policy models with higher win rates.}
We further study the quality of reward models by evaluating the induced policy models in Figure \ref{fig:policy}. We use the UltraFeedback dataset \cite{cui2024ultrafeedback} for both RLHF and DPO tasks with different reward models. We choose Gemma-9B \cite{gemma_2024} as the backbone of all policy models. Then we evaluate the trained policy models on the AlpacaEval-2 benchmark \cite{dubois2024length}. We use three main metrics: WR, LC, and Length, where WR is the win rate against GPT-4, LC is the win rate after accounting for response length, and Length is defined as the average number of characters in the generated responses. For RLHF policies, we use Best-of-N ($N=8$) sampling for the final responses. For DPO policies, we use the on-policy responses generated by Gemma-2-9b-it and labeled by the reward models for DPO training. The results show that PRISM can induce better policy models with higher win rates and moderate response length. This improvement is attributed to PRISM's ability to align reward signals with generalizable human preferences, rather than overfitting to superficial cues such as verbosity.

\noindent \textbf{PRISM models achieve near-zero correlations with shortcuts.}  In Figure \ref{fig:prism_analysis}, we conduct a correlation analysis on RM-Bench between three different shortcuts (i.e., response length, tone, and sycophancy) and the reward scores from two methods: a Bradley-Terry (BT) reward model with Llama-3.1-8B as backbone, and a PRISM reward model with the same backbone. We report the Pearson Correlation Coefficient (PCC) and the corresponding p-value for each case. From the results, the BT model exhibits a strong correlation with response length and non-trivial correlations with tone and sycophancy, indicating that the BT reward model is biased by these shortcuts. In contrast, the PRISM model achieves near-zero PCCs across all three shortcut dimensions, demonstrating its effectiveness in mitigating shortcut learning.

\begin{figure}[ht]
    \centering
    \includegraphics[width=\linewidth]{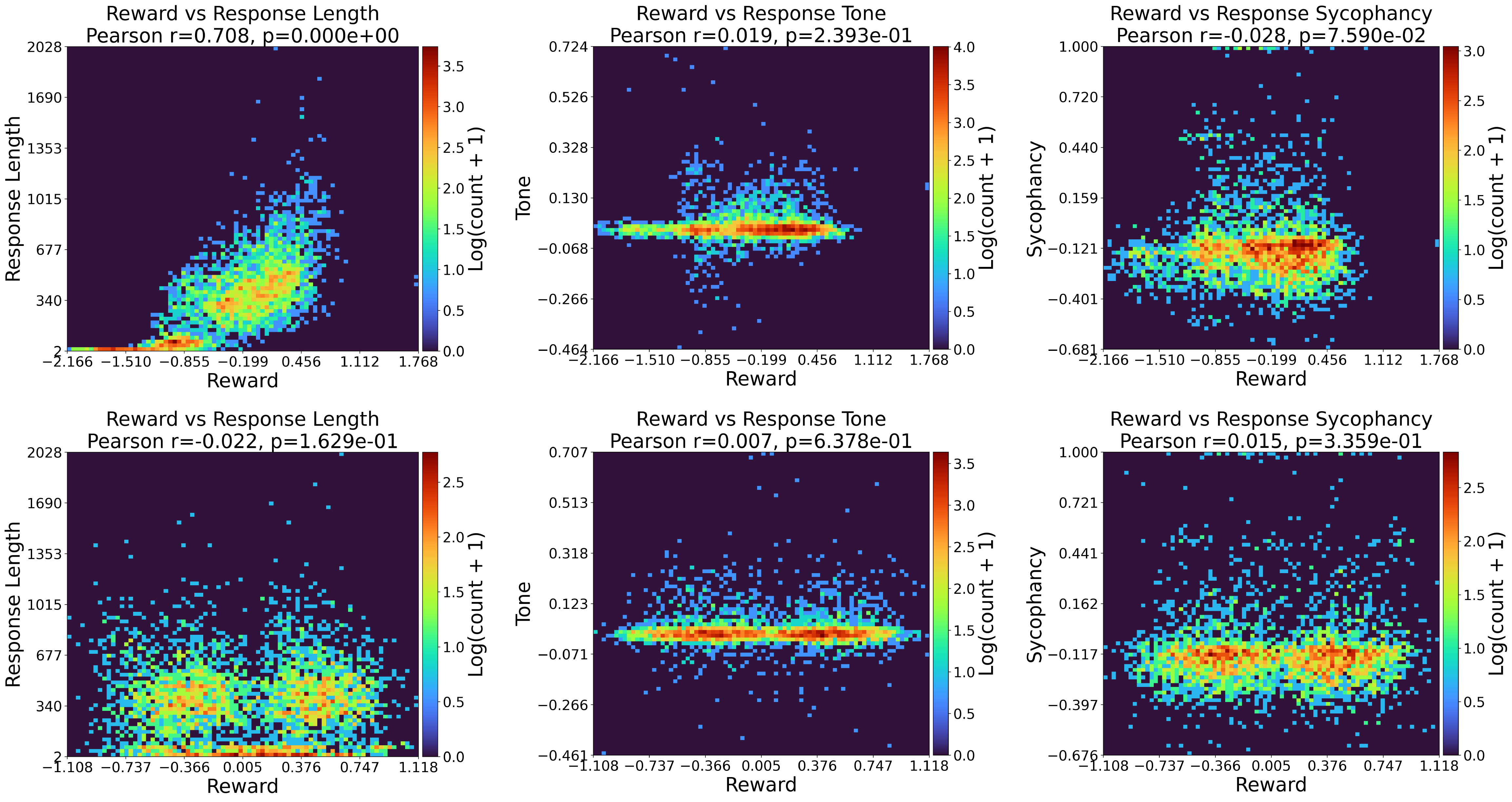}
    \caption{Correlation analysis on RM-Bench. \textbf{Top row:} Llama-3.1 8B model trained with BT. \textbf{Bottom row:} Llama-3.1 8B model trained with PRISM. PRISM achieves a close-to-zero Pearson Correlation Coefficient (PCC) with the shortcuts, illustrating the effectiveness of shortcut mitigation.}
    \label{fig:prism_analysis}
\end{figure}

\section{Related Works}
\noindent \textbf{Reinforcement learning from human feedback.} 
Reinforcement Learning from Human Feedback originated in continuous control domains \cite{JMLR:v18:16-634} but has become pivotal for aligning LLMs with human preferences. Recent applications span tasks like summarization \cite{Stiennon2020LearningTS,Wu2021RecursivelySB} and instruction following \cite{ouyang2022training,zengevaluating}. The classical RLHF pipeline involves two stages: learning a reward model from human preferences (e.g., rankings or comparisons) and then optimizing the policy via reinforcement learning, typically using on-policy algorithms like proximal policy optimization (PPO) \cite{schulman2017proximal}. While effective, PPO-based training suffers from computational costs and instability due to its reliance on on-policy sampling. Another line of work, Direct Preference Optimization (DPO) \cite{rafailov2023direct} and its variants \cite{ethayarajh2024kto,shao2024deepseekmath,xie2024exploratory,meng2024simpo}, simplify this process by converting reward maximization into a single-step offline policy optimization objective, which circumvents the need for explicit reward modeling and mitigating PPO’s instabilities. Our approach can benefit RLHF algorithms, which rely on the quality and generalization of reward models.

\noindent \textbf{Reward hacking and shortcut learning.}
Reward learning is inherently data-driven and faces significant challenges in evaluating out-of-distribution responses. This phenomenon, where the policy language model exploits imperfections in the reward model, is commonly known as reward hacking \cite{amodei2016concrete,skalse2022defining}, and is also called reward over-optimization \cite{gao2023scaling}, or reward tampering \cite{denison2024sycophancy}. The work closely relevant to our method includes a series of reward regularization methods, such as adding a specific penalty \cite{chen2024odin,singhallong} to the reward, using a reward ensemble \cite{costereward}, or leveraging the multi-objective with fine-grained annotations \cite{wang2024interpretable}. This phenomenon is also fundamental in the context of classical machine learning, known as shortcut learning. Existing methods focus on mitigating spurious correlations \cite{nam2020learning,kirichenkolast,zheng2024learning,zheng2024spuriousness,ye2025improving}, learning group-invariant representations \cite{creager2021environment}, and distributionally robust optimization to minimize worst-case errors \cite{sagawadistributionally,yang2023change}. Our method, PRISM, bridges these domains by reframing reward hacking as one manifestation of shortcut learning. Unlike single-penalty approaches, PRISM mitigates multiple biases via an approximated group-invariant kernel. It avoids the computational cost of ensembles with lightweight embeddings and operates without additional attribute annotations, sidestepping the limitations of multi-objective methods.


\section{Conclusion}
In this paper, we present a novel framework that reinterprets reward hacking as learning shortcuts in the reward models. By learning shortcuts as group-invariant kernels and incorporating reward-invariant regularization to rectify shortcut behaviors, our method PRISM improves o.o.d. generalization of reward models on challenging unseen data. Unlike previous bias-specific methods, our approach systematically unifies diverse biases into a single learning objective. We anticipate this work will inspire broader exploration of shortcut-aware regularization in reward modeling, bridging the gap between theoretical insights and practical alignment challenges.

\section*{Acknowledgements}
This work is supported in part by the US National Science Foundation under grants CCF-2217071, CNS-2213700, IIS-2106913.  Any opinions, findings, and conclusions or recommendations expressed in this material are those of the
author(s) and do not necessarily reflect the views of the National Science Foundation.


\bibliographystyle{unsrt}
\bibliography{bib/wenqian}



\clearpage
\section*{NeurIPS Paper Checklist}

\begin{enumerate}

\item {\bf Claims}
    \item[] Question: Do the main claims made in the abstract and introduction accurately reflect the paper's contributions and scope?
    \item[] Answer: \answerYes{} 
    \item[] Justification: The main claims made in the abstract and introduction accurately reflect the paper’s contributions and experiment results.
    \item[] Guidelines:
    \begin{itemize}
        \item The answer NA means that the abstract and introduction do not include the claims made in the paper.
        \item The abstract and/or introduction should clearly state the claims made, including the contributions made in the paper and important assumptions and limitations. A No or NA answer to this question will not be perceived well by the reviewers. 
        \item The claims made should match theoretical and experimental results, and reflect how much the results can be expected to generalize to other settings. 
        \item It is fine to include aspirational goals as motivation as long as it is clear that these goals are not attained by the paper. 
    \end{itemize}

\item {\bf Limitations}
    \item[] Question: Does the paper discuss the limitations of the work performed by the authors?
    \item[] Answer: \answerYes{} 
    \item[] Justification: We discussed the limitations of the work in the Appendix.
    \item[] Guidelines:
    \begin{itemize}
        \item The answer NA means that the paper has no limitation while the answer No means that the paper has limitations, but those are not discussed in the paper. 
        \item The authors are encouraged to create a separate "Limitations" section in their paper.
        \item The paper should point out any strong assumptions and how robust the results are to violations of these assumptions (e.g., independence assumptions, noiseless settings, model well-specification, asymptotic approximations only holding locally). The authors should reflect on how these assumptions might be violated in practice and what the implications would be.
        \item The authors should reflect on the scope of the claims made, e.g., if the approach was only tested on a few datasets or with a few runs. In general, empirical results often depend on implicit assumptions, which should be articulated.
        \item The authors should reflect on the factors that influence the performance of the approach. For example, a facial recognition algorithm may perform poorly when image resolution is low or images are taken in low lighting. Or a speech-to-text system might not be used reliably to provide closed captions for online lectures because it fails to handle technical jargon.
        \item The authors should discuss the computational efficiency of the proposed algorithms and how they scale with dataset size.
        \item If applicable, the authors should discuss possible limitations of their approach to address problems of privacy and fairness.
        \item While the authors might fear that complete honesty about limitations might be used by reviewers as grounds for rejection, a worse outcome might be that reviewers discover limitations that aren't acknowledged in the paper. The authors should use their best judgment and recognize that individual actions in favor of transparency play an important role in developing norms that preserve the integrity of the community. Reviewers will be specifically instructed to not penalize honesty concerning limitations.
    \end{itemize}

\item {\bf Theory assumptions and proofs}
    \item[] Question: For each theoretical result, does the paper provide the full set of assumptions and a complete (and correct) proof?
    \item[] Answer: \answerYes{} 
    \item[] Justification: We include any theoretical assumptions and proofs in both the main paper and the Appendix.
    \item[] Guidelines:
    \begin{itemize}
        \item The answer NA means that the paper does not include theoretical results. 
        \item All the theorems, formulas, and proofs in the paper should be numbered and cross-referenced.
        \item All assumptions should be clearly stated or referenced in the statement of any theorems.
        \item The proofs can either appear in the main paper or the supplemental material, but if they appear in the supplemental material, the authors are encouraged to provide a short proof sketch to provide intuition. 
        \item Inversely, any informal proof provided in the core of the paper should be complemented by formal proofs provided in appendix or supplemental material.
        \item Theorems and Lemmas that the proof relies upon should be properly referenced. 
    \end{itemize}

    \item {\bf Experimental result reproducibility}
    \item[] Question: Does the paper fully disclose all the information needed to reproduce the main experimental results of the paper to the extent that it affects the main claims and/or conclusions of the paper (regardless of whether the code and data are provided or not)?
    \item[] Answer: \answerYes{} 
    \item[] Justification: The paper discloses the experimental details needed to reproduce the main results.
    \item[] Guidelines:
    \begin{itemize}
        \item The answer NA means that the paper does not include experiments.
        \item If the paper includes experiments, a No answer to this question will not be perceived well by the reviewers: Making the paper reproducible is important, regardless of whether the code and data are provided or not.
        \item If the contribution is a dataset and/or model, the authors should describe the steps taken to make their results reproducible or verifiable. 
        \item Depending on the contribution, reproducibility can be accomplished in various ways. For example, if the contribution is a novel architecture, describing the architecture fully might suffice, or if the contribution is a specific model and empirical evaluation, it may be necessary to either make it possible for others to replicate the model with the same dataset, or provide access to the model. In general. releasing code and data is often one good way to accomplish this, but reproducibility can also be provided via detailed instructions for how to replicate the results, access to a hosted model (e.g., in the case of a large language model), releasing of a model checkpoint, or other means that are appropriate to the research performed.
        \item While NeurIPS does not require releasing code, the conference does require all submissions to provide some reasonable avenue for reproducibility, which may depend on the nature of the contribution. For example
        \begin{enumerate}
            \item If the contribution is primarily a new algorithm, the paper should make it clear how to reproduce that algorithm.
            \item If the contribution is primarily a new model architecture, the paper should describe the architecture clearly and fully.
            \item If the contribution is a new model (e.g., a large language model), then there should either be a way to access this model for reproducing the results or a way to reproduce the model (e.g., with an open-source dataset or instructions for how to construct the dataset).
            \item We recognize that reproducibility may be tricky in some cases, in which case authors are welcome to describe the particular way they provide for reproducibility. In the case of closed-source models, it may be that access to the model is limited in some way (e.g., to registered users), but it should be possible for other researchers to have some path to reproducing or verifying the results.
        \end{enumerate}
    \end{itemize}

\item {\bf Open access to data and code}
    \item[] Question: Does the paper provide open access to the data and code, with sufficient instructions to faithfully reproduce the main experimental results, as described in supplemental material?
    \item[] Answer: \answerYes{} 
    \item[] Justification: We provide an anonymous link to our code in the footnote of the abstract.
    \item[] Guidelines:
    \begin{itemize}
        \item The answer NA means that paper does not include experiments requiring code.
        \item Please see the NeurIPS code and data submission guidelines (\url{https://nips.cc/public/guides/CodeSubmissionPolicy}) for more details.
        \item While we encourage the release of code and data, we understand that this might not be possible, so “No” is an acceptable answer. Papers cannot be rejected simply for not including code, unless this is central to the contribution (e.g., for a new open-source benchmark).
        \item The instructions should contain the exact command and environment needed to run to reproduce the results. See the NeurIPS code and data submission guidelines (\url{https://nips.cc/public/guides/CodeSubmissionPolicy}) for more details.
        \item The authors should provide instructions on data access and preparation, including how to access the raw data, preprocessed data, intermediate data, and generated data, etc.
        \item The authors should provide scripts to reproduce all experimental results for the new proposed method and baselines. If only a subset of experiments are reproducible, they should state which ones are omitted from the script and why.
        \item At submission time, to preserve anonymity, the authors should release anonymized versions (if applicable).
        \item Providing as much information as possible in supplemental material (appended to the paper) is recommended, but including URLs to data and code is permitted.
    \end{itemize}

\item {\bf Experimental setting/details}
    \item[] Question: Does the paper specify all the training and test details (e.g., data splits, hyperparameters, how they were chosen, type of optimizer, etc.) necessary to understand the results?
    \item[] Answer: \answerYes{} 
    \item[] Justification: We include the experimental setting in both main paper and the Appendix.
    \item[] Guidelines:
    \begin{itemize}
        \item The answer NA means that the paper does not include experiments.
        \item The experimental setting should be presented in the core of the paper to a level of detail that is necessary to appreciate the results and make sense of them.
        \item The full details can be provided either with the code, in appendix, or as supplemental material.
    \end{itemize}

\item {\bf Experiment statistical significance}
    \item[] Question: Does the paper report error bars suitably and correctly defined or other appropriate information about the statistical significance of the experiments?
    \item[] Answer: \answerNo{} 
    \item[] Justification: Our method is deterministic following previous methods.
    \item[] Guidelines:
    \begin{itemize}
        \item The answer NA means that the paper does not include experiments.
        \item The authors should answer "Yes" if the results are accompanied by error bars, confidence intervals, or statistical significance tests, at least for the experiments that support the main claims of the paper.
        \item The factors of variability that the error bars are capturing should be clearly stated (for example, train/test split, initialization, random drawing of some parameter, or overall run with given experimental conditions).
        \item The method for calculating the error bars should be explained (closed form formula, call to a library function, bootstrap, etc.)
        \item The assumptions made should be given (e.g., Normally distributed errors).
        \item It should be clear whether the error bar is the standard deviation or the standard error of the mean.
        \item It is OK to report 1-sigma error bars, but one should state it. The authors should preferably report a 2-sigma error bar than state that they have a 96\% CI, if the hypothesis of Normality of errors is not verified.
        \item For asymmetric distributions, the authors should be careful not to show in tables or figures symmetric error bars that would yield results that are out of range (e.g. negative error rates).
        \item If error bars are reported in tables or plots, The authors should explain in the text how they were calculated and reference the corresponding figures or tables in the text.
    \end{itemize}

\item {\bf Experiments compute resources}
    \item[] Question: For each experiment, does the paper provide sufficient information on the computer resources (type of compute workers, memory, time of execution) needed to reproduce the experiments?
    \item[] Answer: \answerYes{} 
    \item[] Justification: We indicate the type of GPU in Implementation Details.
    \item[] Guidelines:
    \begin{itemize}
        \item The answer NA means that the paper does not include experiments.
        \item The paper should indicate the type of compute workers CPU or GPU, internal cluster, or cloud provider, including relevant memory and storage.
        \item The paper should provide the amount of compute required for each of the individual experimental runs as well as estimate the total compute. 
        \item The paper should disclose whether the full research project required more compute than the experiments reported in the paper (e.g., preliminary or failed experiments that didn't make it into the paper). 
    \end{itemize}
    
\item {\bf Code of ethics}
    \item[] Question: Does the research conducted in the paper conform, in every respect, with the NeurIPS Code of Ethics \url{https://neurips.cc/public/EthicsGuidelines}?
    \item[] Answer: \answerYes{} 
    \item[] Justification: The research conducted in the paper conforms with the NeurIPS Code of Ethics.
    \item[] Guidelines: The research conducted in the paper conforms with the NeurIPS Code of Ethics.
    \begin{itemize}
        \item The answer NA means that the authors have not reviewed the NeurIPS Code of Ethics.
        \item If the authors answer No, they should explain the special circumstances that require a deviation from the Code of Ethics.
        \item The authors should make sure to preserve anonymity (e.g., if there is a special consideration due to laws or regulations in their jurisdiction).
    \end{itemize}

\item {\bf Broader impacts}
    \item[] Question: Does the paper discuss both potential positive societal impacts and negative societal impacts of the work performed?
    \item[] Answer: \answerYes{} 
    \item[] Justification: We discussed the broader impacts of the work in the Appendix.
    \item[] Guidelines:
    \begin{itemize}
        \item The answer NA means that there is no societal impact of the work performed.
        \item If the authors answer NA or No, they should explain why their work has no societal impact or why the paper does not address societal impact.
        \item Examples of negative societal impacts include potential malicious or unintended uses (e.g., disinformation, generating fake profiles, surveillance), fairness considerations (e.g., deployment of technologies that could make decisions that unfairly impact specific groups), privacy considerations, and security considerations.
        \item The conference expects that many papers will be foundational research and not tied to particular applications, let alone deployments. However, if there is a direct path to any negative applications, the authors should point it out. For example, it is legitimate to point out that an improvement in the quality of generative models could be used to generate deepfakes for disinformation. On the other hand, it is not needed to point out that a generic algorithm for optimizing neural networks could enable people to train models that generate Deepfakes faster.
        \item The authors should consider possible harms that could arise when the technology is being used as intended and functioning correctly, harms that could arise when the technology is being used as intended but gives incorrect results, and harms following from (intentional or unintentional) misuse of the technology.
        \item If there are negative societal impacts, the authors could also discuss possible mitigation strategies (e.g., gated release of models, providing defenses in addition to attacks, mechanisms for monitoring misuse, mechanisms to monitor how a system learns from feedback over time, improving the efficiency and accessibility of ML).
    \end{itemize}
    
\item {\bf Safeguards}
    \item[] Question: Does the paper describe safeguards that have been put in place for responsible release of data or models that have a high risk for misuse (e.g., pretrained language models, image generators, or scraped datasets)?
    \item[] Answer: \answerNA{} 
    \item[] Justification: The paper poses no such risks.
    \item[] Guidelines:
    \begin{itemize}
        \item The answer NA means that the paper poses no such risks.
        \item Released models that have a high risk for misuse or dual-use should be released with necessary safeguards to allow for controlled use of the model, for example by requiring that users adhere to usage guidelines or restrictions to access the model or implementing safety filters. 
        \item Datasets that have been scraped from the Internet could pose safety risks. The authors should describe how they avoided releasing unsafe images.
        \item We recognize that providing effective safeguards is challenging, and many papers do not require this, but we encourage authors to take this into account and make a best faith effort.
    \end{itemize}

\item {\bf Licenses for existing assets}
    \item[] Question: Are the creators or original owners of assets (e.g., code, data, models), used in the paper, properly credited and are the license and terms of use explicitly mentioned and properly respected?
    \item[] Answer: \answerYes{} 
    \item[] Justification: The original owners of the assets used in the paper have been properly cited and mentioned, and their licenses have been respected.
    \item[] Guidelines:
    \begin{itemize}
        \item The answer NA means that the paper does not use existing assets.
        \item The authors should cite the original paper that produced the code package or dataset.
        \item The authors should state which version of the asset is used and, if possible, include a URL.
        \item The name of the license (e.g., CC-BY 4.0) should be included for each asset.
        \item For scraped data from a particular source (e.g., website), the copyright and terms of service of that source should be provided.
        \item If assets are released, the license, copyright information, and terms of use in the package should be provided. For popular datasets, \url{paperswithcode.com/datasets} has curated licenses for some datasets. Their licensing guide can help determine the license of a dataset.
        \item For existing datasets that are re-packaged, both the original license and the license of the derived asset (if it has changed) should be provided.
        \item If this information is not available online, the authors are encouraged to reach out to the asset's creators.
    \end{itemize}

\item {\bf New assets}
    \item[] Question: Are new assets introduced in the paper well documented and is the documentation provided alongside the assets?
    \item[] Answer: \answerNA{} 
    \item[] Justification: this paper does not release new assets.
    \item[] Guidelines:
    \begin{itemize}
        \item The answer NA means that the paper does not release new assets.
        \item Researchers should communicate the details of the dataset/code/model as part of their submissions via structured templates. This includes details about training, license, limitations, etc. 
        \item The paper should discuss whether and how consent was obtained from people whose asset is used.
        \item At submission time, remember to anonymize your assets (if applicable). You can either create an anonymized URL or include an anonymized zip file.
    \end{itemize}

\item {\bf Crowdsourcing and research with human subjects}
    \item[] Question: For crowdsourcing experiments and research with human subjects, does the paper include the full text of instructions given to participants and screenshots, if applicable, as well as details about compensation (if any)? 
    \item[] Answer: \answerNA{} 
    \item[] Justification: This paper does not involve crowdsourcing nor research with human subjects.
    \item[] Guidelines:
    \begin{itemize}
        \item The answer NA means that the paper does not involve crowdsourcing nor research with human subjects.
        \item Including this information in the supplemental material is fine, but if the main contribution of the paper involves human subjects, then as much detail as possible should be included in the main paper. 
        \item According to the NeurIPS Code of Ethics, workers involved in data collection, curation, or other labor should be paid at least the minimum wage in the country of the data collector. 
    \end{itemize}

\item {\bf Institutional review board (IRB) approvals or equivalent for research with human subjects}
    \item[] Question: Does the paper describe potential risks incurred by study participants, whether such risks were disclosed to the subjects, and whether Institutional Review Board (IRB) approvals (or an equivalent approval/review based on the requirements of your country or institution) were obtained?
    \item[] Answer: \answerNA{} 
    \item[] Justification: This paper does not involve crowdsourcing nor research with human subjects.
    \item[] Guidelines:
    \begin{itemize}
        \item The answer NA means that the paper does not involve crowdsourcing nor research with human subjects.
        \item Depending on the country in which research is conducted, IRB approval (or equivalent) may be required for any human subjects research. If you obtained IRB approval, you should clearly state this in the paper. 
        \item We recognize that the procedures for this may vary significantly between institutions and locations, and we expect authors to adhere to the NeurIPS Code of Ethics and the guidelines for their institution. 
        \item For initial submissions, do not include any information that would break anonymity (if applicable), such as the institution conducting the review.
    \end{itemize}

\item {\bf Declaration of LLM usage}
    \item[] Question: Does the paper describe the usage of LLMs if it is an important, original, or non-standard component of the core methods in this research? Note that if the LLM is used only for writing, editing, or formatting purposes and does not impact the core methodology, scientific rigorousness, or originality of the research, declaration is not required.
    \item[] Answer: \answerYes{} 
    \item[] Justification: We only use LLM for Editing (e.g., grammar, spelling, word choice).
    \item[] Guidelines:
    \begin{itemize}
        \item The answer NA means that the core method development in this research does not involve LLMs as any important, original, or non-standard components.
        \item Please refer to our LLM policy (\url{https://neurips.cc/Conferences/2025/LLM}) for what should or should not be described.
    \end{itemize}

\end{enumerate}

\clearpage
\appendix

\section*{\Large Appendix}

\section{Notation Table}

\begin{table}[h]
\centering
\caption{Summary of notations appeared in this paper.}
\label{tab:notation}
\renewcommand{\arraystretch}{1.2}
\begin{tabular}{ll}
\toprule
\textbf{Symbol} & \textbf{Description} \\
\midrule
$\mathcal{X}$ & Input prompt space \\
$\mathcal{Y}$ & Response space \\
$\mathcal{D}_{\mathrm{pref}}$ & Human preference dataset $\{(x^{(i)}, y_w^{(i)}, y_l^{(i)})\}_{i=1}^N$ \\
$x \in \mathcal{X}$ & Input prompt \\
$y_w, y_l \in \mathcal{Y}$ & Chosen and rejected responses \\
$\pi, \pi_{\mathrm{ref}}$ & Policy model and reference policy \\
$r(x, y)$ & Latent reward function over prompt-response pairs \\
$r_\theta(x, y)$ & Parametric reward model with parameters $\theta$ \\
$\sigma(\cdot)$ & Sigmoid function, $1 / (1 + \exp(\cdot))$ \\
$\mathbb{D}_{\mathrm{KL}}$ & Kullback-Leibler divergence \\
$\mathcal{Z}_s$, $\mathcal{Z}_g$ & Latent spaces for spurious and generalizable features \\
$z_s$, $z_g$ & Spurious and generalizable latent features \\
$h(x, y)$ & Latent encoder mapping to $(z_s, z_g)$ \\
$\mathcal{G}$ & Compact unitary group representing shortcut transformations \\
$g, g' \in \mathcal{G}$ & Group actions (e.g., verbosity, tone) \\
$\mu$ & Haar measure on $\mathcal{G}$ \\
$\kappa(\cdot, \cdot)$ & Base kernel function (i.e., RBF kernel) \\
$\mathcal{K}(y_w, y_l|x)$ & Group-invariant kernel between responses \\
$\psi(y, t, \tau|x)$ & Truncated CDF of dot product $\langle y, g t \rangle$ \\
$\phi(y, t, \tau)$ & Empirical CDF feature function \\
$\Phi(y)$ & Random feature map for approximating invariant kernel \\
$t_j$ & Template vector sampled from $\mathbb{S}^{d-1}$ \\
$m$ & Number of templates $t_j$ \\
$n$ & Number of bins used in feature map approximation \\
$d_{\mathcal{G}}(y_w, y_l|x)$ & Distance between group orbits of $y_w$ and $y_l$ \\
$\mathcal{O}_y$ & Group orbit of $y$: $\{g \cdot y \mid g \in \mathcal{G}\}$ \\
$\alpha_j$ & Weight of shortcut kernel component $j$ \\
$\omega_j$ & Kernel width for shortcut feature $j$ \\
$\Phi_j$ & Feature map for shortcut type $j$ \\
$\mathcal{B}$ & Batch of $(x^{(i)}, y^{(i)})$ pairs \\
$\mathcal{R}_{\text{global}}(\theta)$ & Global decorrelation regularization term \\
$\text{Cov}_{\mathcal{B}}(\cdot, \cdot)$ & Empirical covariance on batch $\mathcal{B}$ \\
$\Delta_{r_\theta}(y_w,y_l|x)$ & Standard reward margin \\
$\bar{r}_\theta$, $\bar{\Phi}_j$ & Batch means of reward and shortcut feature values \\
$\lambda_1, \lambda_2$ & Regularization weights \\
$\mathcal{L}_{\text{PRISM}}$ & Final PRISM training objective \\
$\mathcal{H}_{\mathcal{K}_{\text{inv}}}$ & RKHS induced by invariant kernel \\
$\mathcal{E}_V(r_\theta)$ & Risk under log-sigmoid loss $V$ \\
$C$ & Radius of RKHS hypothesis ball \\
$L$ & Lipschitz constant of loss function \\
\bottomrule
\end{tabular}
\end{table}

\section{Broader Impacts}
\paragraph{Societal Impacts.}

The ability to mitigate shortcut behaviors in reward models has significant societal implications for the safe deployment of AI systems. By reducing reliance on spurious attributes like verbosity, sycophancy, or tone, PRISM enhances the trustworthiness of language models in high-stakes applications such as healthcare, education, and legal analysis. For instance, mitigating sycophancy prevents models from generating misleadingly agreeable responses that could compromise decision-making, while addressing verbosity biases avoids favoring unhelpful yet lengthy outputs. This could also promote fairness by reducing unintended correlations between superficial text features (e.g., mentioning specific keywords) and perceived quality, which could otherwise perpetuate harmful stereotypes. 

\paragraph{Technical Impacts.}
PRISM improves the robustness of preference-based alignment by unifying diverse shortcut mitigation objectives under a single invariant learning framework. Unlike prior methods that address individual shortcuts in isolation, our approach enables joint mitigation through group-invariant kernels, offering a flexible paradigm for future reward modeling. The integration of kernel methods with invariance theory opens new avenues for research in robust RLHF, particularly in handling complex, multi-dimensional shortcut behaviors. Practically, PRISM’s compatibility with both heuristic and LLM-based shortcut detectors lowers the barrier to adopting robust alignment techniques. However, its performance depends on identifying relevant shortcut features, highlighting the need for dynamic shortcut detection mechanisms as new spurious correlations emerge. This work bridges the gap between invariant representation learning and preference-based alignment, potentially influencing broader applications in safety-critical AI systems.


\section{Limitations}

Our proposed framework introduces a novel method to address shortcut learning behaviors effectively in preference-based alignment tasks, which provides a principled solution that significantly reduces known shortcut reliances. However, practical challenges still remain. For example, the current implementation benefits from prior knowledge about specific shortcuts (e.g., length, tone, sycophancy), which highlights the potential for future research into automatic detection and mitigation of subtle or evolving shortcuts. Developing dedicated benchmark datasets could further facilitate this research. Additionally, applying our approach to larger or more complex tasks may require balancing computational efficiency and budget with algorithm performance, especially since extracting features from LLM-based evaluations can be costly. Lastly, while our results demonstrate effectiveness in text-based preference tasks, further exploration into multimodal scenarios and low-resource languages presents valuable opportunities for future work.

\section{Proofs of Theoretical Results}

\subsection{Proof of Proposition 1}
\setcounter{proposition}{0}
\setcounter{theorem}{0}
\begin{proposition}[Equivalence of Expected Kernel]
We independently sample $m$ templates $t_j, j=1,\dots,m$ with regard to the Gaussian distribution. The feature map $\Phi$ preserves the invariant kernel:
\begin{equation}
\lim _{n\to \infty} \underset{t,g}{\mathbb{E}} \scalT{\Phi(y_w)}{ \Phi(y_l)}_{\mathbb{R}^{(2n+1) \cdot m}}=\lim _{n\to \infty} \underset{t,g}{\mathbb{E}} \sum_{j=1}^m\sum_{k=-n}^{n}\phi(y_w,t_j,\frac{sk}{n})\phi(y_l,t_j,\frac{sk}{n})=\gK_{s}(y_w,y_l|x).
\end{equation}    
\end{proposition}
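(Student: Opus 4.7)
The plan is to proceed by three sequential reductions: (i) take expectation over the sampled group actions $g_i$ to reduce $\phi$ to $\psi$, (ii) take expectation over the i.i.d.\ Gaussian templates $t_j$ to collapse the sum over $j$, and (iii) pass to the limit $n\to\infty$ to convert the discrete sum over bin centers $\tau=sk/n$ into the integral $\int_{-s}^{s}d\tau$ that defines $\gK_s$.

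First I would expand the inner product as $\sum_{j=1}^{m}\sum_{k=-n}^{n}\phi(y_w,t_j,sk/n)\phi(y_l,t_j,sk/n)$ and then unfold each $\phi$ into its average of indicators. Because the sampled group elements $g_1,\dots,g_{|\gG|}$ are \emph{shared} between $\phi(y_w,\cdot)$ and $\phi(y_l,\cdot)$, taking $\mathbb{E}_g$ of the product splits into $|\gG|$ ``diagonal'' terms (a joint probability $\mathbb{P}(\langle g,ty_w\rangle\le\tau,\,\langle g,ty_l\rangle\le\tau)$) and $|\gG|(|\gG|-1)$ ``off-diagonal'' terms that factor as $\psi(y_w,t,\tau)\psi(y_l,t,\tau)$ by independence of distinct $g_i$'s. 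After dividing by $|\gG|^{2}m$, the diagonal contribution is $O(1/(|\gG|\, m))$ and becomes negligible, yielding $\mathbb{E}_g[\phi(y_w,t,\tau)\phi(y_l,t,\tau)]\to (1/m)\,\psi(y_w,t,\tau)\psi(y_l,t,\tau)$. Since the templates $t_j$ are i.i.d.\ with the common distribution of $t$, all $m$ summands over $j$ have identical expectation, and the $1/m$ prefactor collapses the outer sum cleanly.

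At this stage the quantity has the form $\sum_{k=-n}^{n}\mathbb{E}_{t}[\psi(y_w,t,sk/n)\,\psi(y_l,t,sk/n)]$. Reading this as a midpoint Riemann quadrature for the integrand $\tau\mapsto\psi(y_w,t,\tau)\psi(y_l,t,\tau)$ on $[-s,s]$ with step $\Delta\tau=s/n$ (the normalization implicit in the random feature map's scaling), continuity of the product of truncated CDFs together with the uniform bound $|\psi|\le 1$ gives pointwise convergence of the partial sums to $\int_{-s}^{s}\psi(y_w,t,\tau)\psi(y_l,t,\tau)\,d\tau$ as $n\to\infty$. An application of dominated convergence then exchanges the limit with the remaining $\mathbb{E}_t$, producing exactly the definition of $\gK_s(y_w,y_l|x)$.

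The principal subtlety I expect is the treatment of the diagonal $i=j$ contribution in the first reduction: as written, the proposition states only the limit $n\to\infty$, but the factorization $\mathbb{E}[\mathbbm{1}\,\mathbbm{1}]\to\psi\cdot\psi$ is only exact in the large-$|\gG|$ regime. I would resolve this either by interpreting the stated equality as a joint limit with $|\gG|\to\infty$ (consistent with the sample-complexity requirement $|\gG|\gtrsim\log(Nm/\delta)/\epsilon_0^{2}$ appearing in Theorem~\ref{thm:inv}), or by absorbing the $O(1/|\gG|)$ bias into an explicit approximation error. A secondary but routine subtlety is the implicit normalization of the feature map: any missing $\sqrt{s/n}$ factor should be read as rescaling the discrete sum into a genuine Riemann quadrature, which is justified by the bounded variation of each CDF $\psi(y,t,\cdot)$.
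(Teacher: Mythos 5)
Your proof follows the same structural route as the paper's — unfolding $\phi$ into indicator sums, passing the $g$- and $t$-expectations through, and reading the bin sum as a Riemann quadrature for $\int_{-s}^{s}\psi\psi\,d\tau$ — but it is substantially more careful. The paper's own proof expands the inner product, declares it ``a Monte Carlo approximation of the integral,'' and asserts convergence as $n\to\infty$ and as $m\to\infty$ without ever tracking the joint dependence of $\phi(y_w,\cdot)$ and $\phi(y_l,\cdot)$ on the shared group samples $\{g_i\}$. You correctly isolate this: after taking $\mathbb{E}_g$, the $|\mathcal{G}|(|\mathcal{G}|-1)$ off-diagonal terms factor into $\psi(y_w,t,\tau)\psi(y_l,t,\tau)$ by independence, while the $|\mathcal{G}|$ diagonal terms leave a residual $O(1/|\mathcal{G}|)$ bias (a joint probability, not a product) that $n\to\infty$ alone does not remove. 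Your proposed fix — read the claim as a joint limit with $|\mathcal{G}|\to\infty$, consistent with the sample-complexity requirement $|\mathcal{G}|\gtrsim\log(Nm/\delta)/\epsilon_0^{2}$ in Theorem~1 — is precisely what the paper implicitly adopts without saying so. You also correctly flag the missing $\sqrt{s/n}$ normalization needed for the $k$-sum to be a genuine Riemann quadrature rather than a divergent sum of $O(n)$ bounded terms; the paper's definition of $\Phi$ omits it and the proof silently assumes it. In short, the proposal is correct, follows the paper's approach, and is more rigorous than the proof the authors themselves give.
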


\begin{proof}
We aim to show that the inner product between the random feature maps \(\Phi(y_w)\) and \(\Phi(y_l)\) converges to the expected kernel \(\mathcal{K}_s(y_w, y_l \mid x)\) as the number of bins \(n \to \infty\).

\begin{align*}
\langle \Phi(y_w), \Phi(y_l) \rangle 
&= \sum_{j=1}^{m} \sum_{k=-n}^{n} \phi\left(y_w, t_j, \tfrac{sk}{n} \right) \cdot \phi\left(y_l, t_j, \tfrac{sk}{n} \right) \\
&= \sum_{j=1}^{m} \sum_{k=-n}^{n} 
\left( \frac{1}{|\mathcal{G}|\sqrt{m}} \sum_{i=1}^{|\mathcal{G}|} \mathbbm{1}_{\langle g_i t_j, y_w \rangle \leq \tfrac{sk}{n}} \right) 
\left( \frac{1}{|\mathcal{G}|\sqrt{m}} \sum_{i'=1}^{|\mathcal{G}|} \mathbbm{1}_{\langle g_{i'} t_j, y_l \rangle \leq \tfrac{sk}{n}} \right) \\
&= \frac{1}{|\mathcal{G}|^2 m} \sum_{j=1}^{m} \sum_{k=-n}^{n} \sum_{i=1}^{|\mathcal{G}|} \sum_{i'=1}^{|\mathcal{G}|}
      \mathbbm{1}_{\langle g_i t_j, y_w \rangle \leq \tfrac{sk}{n}} \cdot 
      \mathbbm{1}_{\langle g_{i'} t_j, y_l \rangle \leq \tfrac{sk}{n}}
\end{align*}

This expression is a Monte Carlo approximation of the integral:
\[
\mathbb{E}_{t} \int_{-s}^{s} \psi(y_w, t, \tau) \cdot \psi(y_l, t, \tau) \, d\tau = \mathcal{K}_s(y_w, y_l \mid x)
\]

where \(\psi(y, t, \tau) = \mathbb{P}_{g}(\langle g t, y \rangle \leq \tau)\) is the truncated CDF of the dot product between group-transformed templates and response vectors.

As \(n \to \infty\), the sum over bins converges to the Riemann integral over \([-s, s]\). As \(m \to \infty\), the template sampling converges to the expectation over \(t\). Thus:

\[
\lim_{n \to \infty} \mathbb{E}_{t,g} \langle \Phi(y_w), \Phi(y_l) \rangle = \mathcal{K}_s(y_w, y_l \mid x)
\]

This concludes the proof.
\end{proof}

\subsection{Proof of Theorem 1}

\begin{theorem}[Invariant Features Maps and Distances between Orbits]
Let $\eps\in(0,1)$ and  $y_w,y_l \in \gY$. Denote the orbit to be the collection of all group-transformations of a given input $y$: $\mathcal{O}_{x}=\{gx,g \in \gG\}$. We define the distance measure $d_{\gG}$ between two orbits $\mathcal{O}_{y_w}$ and $\mathcal{O}_{y_l}$:  $d_{\gG}(y_w,y_l|x)= \frac{1}{\sqrt{2\pi d}}\int_{g \in \gG}\int_{g' \in \gG}  \| g y_w-g' y_l \|_2 d\mu(g)d\mu(g')$.
Fix $\eps_0,\delta \in (0,1)$. For a number of bins 
$n\geq \frac{3}{\eps_0}$, templates $m\geq \frac{9C_1 }{\eps^2_0}\log(\frac{N}{\delta})$, and group elements $|\gG|\geq\frac{9C_2}{\eps^2_0}\log(\frac{Nm}{\delta})$, where $C_1,C_2$ are constants. The following inequality holds with probability $1-2\delta$:
\begin{equation}
\eps - \delta_2(d,\eps)-\eps_0 \leq \scalT{\Phi(y_w)}{\Phi(y_l)}- (1-d_{\gG}(y_w,y_l|x))\leq  \eps_{0}+ \eps+ \delta_{1}(d,\eps), 
\end{equation}
where $ i=1\dots N, j=1\dots N$.
\end{theorem}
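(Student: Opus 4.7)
The plan is to decompose the total deviation on the left-hand side of Equation~\ref{eq:JL} into three sources of error---bin discretization, sampling of group elements, and sampling of templates---and then to bridge the approximated kernel to the orbit distance via a Johnson--Lindenstrauss-type lemma for Gaussian templates. I would first invoke Proposition~\ref{prop:equivalance} to write $\scalT{\Phi(y_w)}{\Phi(y_l)}$ as a Riemann sum approximating the CDF-based kernel $\gK_s(y_w,y_l|x)$; because the integrand $\psi(y,t,\tau)$ takes values in $[0,1]$ and is $1$-Lipschitz in $\tau$ on $[-s,s]$, choosing $n\geq 3/\eps_0$ bounds the discretization gap by $\eps_0/3$.

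Next, I would bound the two sampling errors by Hoeffding's inequality. For fixed $(t_j,\tau)$ the empirical CDF $\phi(y,t_j,\tau)$ is a mean of $|\gG|$ bounded indicator variables; requiring $|\gG|\geq (9C_2/\eps_0^2)\log(Nm/\delta)$ and union-bounding over the $Nm$ choices of $(j,k)$ ensures that with probability at least $1-\delta$, every empirical CDF is within $\eps_0/3$ of its true value $\psi(y,t_j,\tau)$. A second Hoeffding application across the $m$ independently drawn Gaussian templates, with $m\geq (9C_1/\eps_0^2)\log(N/\delta)$ and a union bound over the $N^2$ response pairs indexed by $(i,j)$, controls the template-averaging error by another $\eps_0/3$ with probability at least $1-\delta$. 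Together these account for the $\eps_0$ term and give an overall failure probability bounded by $2\delta$.

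The remaining step is the JL-type bridge relating $\gK_s(y_w,y_l|x)$ to $1-d_\gG(y_w,y_l|x)$. For $t\sim \gN(0,I_d/d)$ restricted to $\mathbb{S}^{d-1}$, the quantity $\int_{-s}^{s}\psi(y_w,t,\tau)\psi(y_l,t,\tau)\,d\tau$ admits a closed-form evaluation in terms of $\|gy_w-g'y_l\|_2$, and after integrating over the Haar measure on $\gG$ one obtains an expression that equals $1-d_\gG(y_w,y_l|x)$ up to a Gaussian-tail correction determined by the truncation level $s=1+\eps$ and the ambient dimension $d$. This correction is precisely what is denoted by $\delta_1(d,\eps)$ on the upper side and $\delta_2(d,\eps)$ on the lower side, and it shifts the baseline from an additive $\eps$ gap to the two-sided window claimed in the theorem. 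Combining the discretization, sampling, and JL corrections by the triangle inequality then yields Equation~\ref{eq:JL} on the event of probability $1-2\delta$.

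The main obstacle is the JL-type comparison in the last step: obtaining the explicit forms of $\delta_1$ and $\delta_2$ requires a careful computation of the truncated CDF integral for Gaussian templates on $\mathbb{S}^{d-1}$ and a uniform tail bound showing that the cutoff $s=1+\eps$ absorbs the mass outside $[-s,s]$ for every pair $(gy_w,g'y_l)$; the Hoeffding steps and the discretization bound are essentially routine once the regimes for $n$, $m$, and $|\gG|$ are fixed as stated, so the difficulty concentrates in verifying the kernel-to-distance identity with the correct constants.
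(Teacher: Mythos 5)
Your proposal is correct and follows essentially the same strategy as the paper: a triangle-inequality decomposition into discretization, group-sampling, and template-sampling errors, followed by a JL-type bridge from $\gK_s$ to $1-d_{\gG}$, with both accounts leaning on Mroueh et al.\ (2015) for the underlying quantitative estimates. One small divergence in bookkeeping: the paper's prose attributes $\eps$ to group sampling and $\delta_1,\delta_2$ to template sampling, whereas you fold all three approximation errors into the $\eps_0$ slack and reserve $\eps$, $\delta_1(d,\eps)$, $\delta_2(d,\eps)$ for the Gaussian-tail corrections in the kernel-to-distance step; your attribution is actually the more internally consistent reading given that $\delta_1,\delta_2$ depend explicitly on the ambient dimension $d$, and the resulting two-sided inequality is identical either way.
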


\begin{proof}
Let \( y_w, y_l \in \mathcal{Y} \) be responses and fix \( \varepsilon_0, \delta \in (0,1) \). Let the number of bins satisfy \( n \geq \tfrac{3}{\varepsilon_0} \), number of templates \( m \geq \tfrac{9C_1}{\varepsilon_0^2} \log \left( \tfrac{N}{\delta} \right) \), and number of group samples \( |\mathcal{G}| \geq \tfrac{9C_2}{\varepsilon_0^2} \log \left( \tfrac{Nm}{\delta} \right) \), where \(C_1, C_2\) are absolute constants.

We decompose the kernel approximation error into three errors as an upper bound, using triangle inequality:
\begin{align*}
\left| \langle \Phi(y_w), \Phi(y_l) \rangle - \mathcal{K}_s(y_w, y_l | x) \right|
&\leq \underbrace{\left| \langle \Phi(y_w), \Phi(y_l) \rangle - \hat{\mathcal{K}}(y_w, y_l | x) \right|}_{\text{Riemann approximation error}} \\
&+ \underbrace{\left| \hat{\mathcal{K}}(y_w, y_l | x) - \tilde{\mathcal{K}}(y_w, y_l | x) \right|}_{\text{Group sampling error}} \\
&+ \underbrace{\left| \tilde{\mathcal{K}}(y_w, y_l | x) - \mathcal{K}_s(y_w, y_l | x) \right|}_{\text{Template sampling error}}.
\end{align*}

From the analysis in \cite{mroueh2015learning}, the following bounds hold with high probability:

(1). The binning approximation error is at most \(\varepsilon_0\) when \(n \geq \tfrac{3}{\varepsilon_0}\).
(2). The group sampling error is at most \(\varepsilon\) with probability at least \(1 - \delta\), when \(|\mathcal{G}|\) satisfies the stated lower bound.
(3). The template sampling error is bounded by \(\delta_1(d, \varepsilon)\) and \(\delta_2(d, \varepsilon)\), defined as:
\[
\delta_1(d,\varepsilon) = \frac{e^{- d \varepsilon^2 / 16}}{\sqrt{d}}, \quad
\delta_2(d,\varepsilon) = (1 + \varepsilon)e^{- d \varepsilon^2 / 8}.
\]

We then relate the kernel to orbit distances via:
\[
\mathcal{K}_s(y_w, y_l | x) = 1 - d_{\mathcal{G}}(y_w, y_l | x) \pm \delta,
\]
where \( d_{\mathcal{G}}(y_w, y_l | x) = \frac{1}{\sqrt{2\pi d}} \int_{g, g'} \| g y_w - g' y_l \|_2 d\mu(g) d\mu(g') \).

Combining the three error terms and applying union bound over the events, we obtain with probability at least \(1 - 2\delta\):
\[
\varepsilon - \delta_2(d,\varepsilon) - \varepsilon_0
\leq \langle \Phi(y_w), \Phi(y_l) \rangle - (1 - d_{\mathcal{G}}(y_w, y_l | x))
\leq \varepsilon + \varepsilon_0 + \delta_1(d,\varepsilon).
\]
\end{proof}

\subsection{Proof of Theorem 2}

\begin{theorem}[Generalization Bound of PRISM]
Let $\mathcal{H}_{\gK_{\text{inv}}}$ be the Reproducing Kernel Hilbert Space (RKHS) induced by $\gK_{\text{inv}}$ and define the \textit{hypothesis ball} $\mathcal{F}:=\bigl\{r_\theta, \|r_\theta\|_{\mathcal{H}_{\gK_{\text{inv}}}}\le C\bigr\}$ for the fixed radius $C>0$. We assume the log-sigmoid loss $V(\cdot)$ is $L$-Lipschitz. Then, for any $\delta > 0$, with probability $1 - 3\delta$, the following inequality holds:
\begin{equation}
 \mathcal{E}_V(r_\theta^*) \leq \inf_{r_\theta \in \mathcal{F}} \mathcal{E}_V(r_\theta) + \frac{4LC}{\sqrt{N}} \left(1 + \sqrt{\log \tfrac{1}{\delta}}\right) + \lambda_1 LC \left(\frac{2}{\sqrt{m}} + \frac{2}{\sqrt{|\mathcal{G}|}} + \frac{2}{n}\right) + \lambda_2 LC \sqrt{\frac{m \log \tfrac{N}{\delta}}{N}},   
\end{equation}
where $\mathcal{E}_V(r_\theta^*)$ and $\mathcal{E}_V(r_\theta)$ are the optimal risk and empirical risk of the reward model $r_\theta$.
\end{theorem}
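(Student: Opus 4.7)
My plan is to decompose the excess risk of $r_\theta^*$ into three pieces, each corresponding to one of the three terms on the right-hand side, and then apply a union bound (hence the probability $1-3\delta$). Concretely, let $\hat{\mathcal{E}}_V(r)$ denote the empirical risk under the log-sigmoid loss $V$, let $\hat{\mathcal{E}}_V^{\Phi}(r)$ denote the empirical risk that replaces $\mathcal{K}_{\text{inv}}$ by its random-feature approximation $\langle \Phi(y_w),\Phi(y_l)\rangle$, and let $\hat{\mathcal{E}}_V^{\Phi,\text{glob}}(r)$ denote the full PRISM surrogate including $\lambda_2\mathcal{R}_{\text{global}}$. Writing $r^\dagger := \arg\inf_{r\in\mathcal{F}}\mathcal{E}_V(r)$, I will chain
\begin{equation*}
\mathcal{E}_V(r_\theta^*) - \mathcal{E}_V(r^\dagger)
\le \underbrace{\bigl[\mathcal{E}_V(r_\theta^*)-\hat{\mathcal{E}}_V(r_\theta^*)\bigr]}_{(A)}
+ \underbrace{\bigl[\hat{\mathcal{E}}_V(r_\theta^*)-\hat{\mathcal{E}}_V^{\Phi,\text{glob}}(r_\theta^*)\bigr]}_{(B)}
+ \underbrace{\bigl[\hat{\mathcal{E}}_V^{\Phi,\text{glob}}(r^\dagger)-\mathcal{E}_V(r^\dagger)\bigr]}_{(C)},
\end{equation*}
using that $r_\theta^*$ minimizes $\hat{\mathcal{E}}_V^{\Phi,\text{glob}}$ over $\mathcal{F}$ to absorb the cross terms.

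First, for term $(A)$ I would apply the standard Rademacher-complexity bound for the RKHS ball $\mathcal{F}$: by Talagrand contraction the $L$-Lipschitz log-sigmoid loss contributes a factor $L$, and bounded-kernel reproducing functions with $\|r\|\le C$ yield $\hat{\mathfrak{R}}_N(V\circ\mathcal{F}) \le LC/\sqrt{N}$. Combining with McDiarmid's inequality for the uniform deviation (applied with failure probability $\delta$) produces exactly the term $\tfrac{4LC}{\sqrt{N}}(1+\sqrt{\log(1/\delta)})$ and consumes one of the three failure probabilities. Next, for the kernel-approximation contribution inside $(B)$, I would invoke Proposition~\ref{prop:equivalance} and Theorem~\ref{thm:inv}: for every pair $(y_w,y_l)$ the deviation $|\langle\Phi(y_w),\Phi(y_l)\rangle-\mathcal{K}_{\text{inv}}(y_w,y_l\mid x)|$ is controlled by $\varepsilon_0 + \varepsilon + \delta_1(d,\varepsilon)$. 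Selecting $\varepsilon_0 = 2/n$, $\varepsilon = 2/\sqrt{|\mathcal{G}|}$ and taking the $1/\sqrt{m}$ template contribution from the union-bound conditions in Theorem~\ref{thm:inv} converts this into the rate $\tfrac{2}{\sqrt{m}}+\tfrac{2}{\sqrt{|\mathcal{G}|}}+\tfrac{2}{n}$. Since $V$ is $L$-Lipschitz and the kernel enters linearly with weight $\lambda_1$ inside the log-sigmoid, this deviation is amplified by at most $\lambda_1 LC$, giving the second term of the bound and a second failure probability $\delta$.

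Finally, for the global-decorrelation piece of $(B)$ and $(C)$, I will treat the empirical covariance as an average of $m$ centered products $(r_\theta(x^{(i)},y^{(i)})-\bar r_\theta)(\Phi_j(x^{(i)},y^{(i)})-\bar\Phi_j)$. Since each shortcut feature $\Phi_j$ is bounded (its entries are empirical CDFs) and $|r_\theta|\le C\sqrt{\kappa_{\max}}$ on $\mathcal{F}$, each summand is bounded, so Hoeffding's inequality plus a union bound over the $m$ features yields $|\mathcal{R}_{\text{global}}(\theta)-\mathbb{E}[\mathcal{R}_{\text{global}}(\theta)]| = O(\sqrt{m\log(N/\delta)/N})$ uniformly on $\mathcal{F}$ (extending to the RKHS ball via a standard covering/Rademacher argument). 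Multiplying by the $\lambda_2 L$ Lipschitz factor yields the third summand and consumes the last failure probability, so a union bound over the three events delivers the stated confidence $1-3\delta$.

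The main obstacle I anticipate is the uniform control of $\mathcal{R}_{\text{global}}$ over $\mathcal{F}$: naively, covariance between $r_\theta$ and $\Phi_j$ is a bilinear statistic whose fluctuation depends on the norm of $r_\theta$, so I will need either a covering-number argument for the RKHS ball (contraction-style, using that $r\mapsto \mathrm{Cov}(r,\Phi_j)^2/(\sigma_r\sigma_{\Phi_j})^2$ is Lipschitz after noting $\sigma_r$ is bounded away from zero on a well-defined sub-ball), or to argue directly via a symmetrization/Rademacher bound on this quadratic functional. The other two terms are essentially routine once $(A)$ is set up, and carefully specifying the parameters $(\varepsilon_0,\varepsilon,\delta)$ in Theorem~\ref{thm:inv} so that the template, group, and binning contributions appear additively is the only bookkeeping hurdle.
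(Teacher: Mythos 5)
Your proposal follows essentially the same plan as the paper's proof: both decompose the excess risk into (i) a generalization term handled by the Rademacher-complexity bound for RKHS balls combined with Talagrand contraction and McDiarmid, applied twice so that the $\tfrac{2LC}{\sqrt N}(1+\sqrt{\log(1/\delta)})$ terms add to $\tfrac{4LC}{\sqrt N}(1+\sqrt{\log(1/\delta)})$; (ii) a kernel-approximation term obtained from Proposition~\ref{prop:equivalance} and Theorem~\ref{thm:inv}, giving $\tfrac{2}{\sqrt m}+\tfrac{2}{\sqrt{|\gG|}}+\tfrac{2}{n}$ after multiplying by the Lipschitz constant $L$, the radius $C$, and the weight $\lambda_1$; (iii) a decorrelation-estimation term controlled by Hoeffding plus a union bound over the $m$ features, yielding $\sqrt{m\log(N/\delta)/N}$ weighted by $\lambda_2 LC$; and (iv) a union bound over the three events, producing the $1-3\delta$ confidence. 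Your explicit $(A)+(B)+(C)$ chaining, with $(B)$ absorbed by the minimality of $r_\theta^*$ for the PRISM surrogate, is a slightly more careful bookkeeping of the same chain of inequalities the paper uses implicitly.

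One thing worth saying: the obstacle you flag at the end — uniform control of $\mathcal{R}_{\text{global}}(\theta)$ over the entire ball $\mathcal{F}$ rather than pointwise — is a genuine gap that the paper itself does not resolve; it just invokes "standard results on the estimation of empirical covariance over batches with Rademacher complexity and McDiarmid concentration" without the covering/contraction argument (and without addressing that $\sigma_{\mathcal{B},r_\theta}$ in the denominator must be bounded away from zero). You are right that a careful argument would need a covering-number or symmetrization step for this bilinear/normalized statistic. So your proposal not only matches the paper's route but also correctly identifies the weakest link in it.
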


\begin{proof}
Let \(\mathcal{H}_{\mathcal{K}_{\text{inv}}}\) be the RKHS induced by the PRISM kernel \(\mathcal{K}_{\text{inv}}\), and define the hypothesis class \(\mathcal{F} := \{ r_\theta \in \mathcal{H}_{\mathcal{K}_{\text{inv}}} : \|r_\theta\|_{\mathcal{H}_{\mathcal{K}_{\text{inv}}}} \le C \}\). We denote the log sigmoid loss \(V(u)\) as \(L\)-Lipschitz. Let \(r_\theta^*\) denote the global minimum reward function of the expected loss in \(\mathcal{F}\), i.e., \(r_\theta^* = \arg\min_{r \in \mathcal{F}} \mathcal{E}_V(r)\).

We bound the expected loss \(\mathcal{E}_V(r_\theta)\) in terms of three contributions: generalization from finite samples, kernel approximation from shortcut features, and correlation regularization from empirical estimation.

We start by noting that for any \(r_\theta \in \mathcal{F}\), standard generalization theory (e.g., via Rademacher complexity bounds for RKHS balls) yields the following with probability at least \(1 - \delta\) \cite{bousquet2002complexity}:
\[
\mathcal{E}_V(r_\theta) \le \hat{\mathcal{E}}_V(r_\theta) + \frac{2LC}{\sqrt{N}}\left(1 + \sqrt{\log \tfrac{1}{\delta}}\right),
\]
where \(\hat{\mathcal{E}}_V(r_\theta) := \frac{1}{N} \sum_{i=1}^N V(m(y_{w}^{(i)}, y_{l}^{(i)}|x^{(i)}))\) is the empirical risk evaluated on the sample \((x^{(i)}, y_w^{(i)}, y_l^{(i)})\).

We now compare the empirical performance of \(r_\theta\) with the best possible hypothesis in \(\mathcal{F}\). Let \(r_{\mathcal{F}}^* := \arg\min_{r \in \mathcal{F}} \hat{\mathcal{E}}_V(r)\) be the empirical minimizer over \(\mathcal{F}\). By definition,
\[
\hat{\mathcal{E}}_V(r_\theta) \le \hat{\mathcal{E}}_V(r_{\mathcal{F}}^*) + \lambda_1 L \cdot \epsilon_{\text{kernel}} + \lambda_2 L C \cdot \epsilon_{\text{corr}},
\]
where the additional terms arise from kernel approximation and regularization.

For the kernel term, Proposition 1 and Theorem 1 imply that the inner product between feature maps approximates the invariant kernel with error at most:
\[
\epsilon_{\text{kernel}} = \frac{2}{\sqrt{m}} + \frac{2}{\sqrt{|\mathcal{G}|}} + \frac{2}{n}.
\]
Since the log sigmoid loss \(V\) is \(L\)-Lipschitz and the hypothesis space is bounded by \(\|r_\theta\|_{\mathcal{H}} \le C\), the propagated loss deviation is bounded by \(\lambda_1 L C \cdot \epsilon_{\text{kernel}}\).

For the decorrelation regularization, we apply standard results on the estimation of empirical covariance over batches with Rademacher complexity and McDiarmid concentration, which yield that for \(m\) shortcut features, the decorrelation estimation error satisfies:
\[
\epsilon_{\text{corr}} = \sqrt{\frac{m \log \tfrac{N}{\delta}}{N}}.
\]

Finally, we compare the empirical risk of the best empirical function \(r_{\mathcal{F}}^*\) to the expected risk of the true best function \(r_\theta^*\). Using the same generalization bound again with probability at least \(1 - \delta\):
\[
\hat{\mathcal{E}}_V(r_{\mathcal{F}}^*) \le \mathcal{E}_V(r_\theta^*) + \frac{2LC}{\sqrt{N}}\left(1 + \sqrt{\log \tfrac{1}{\delta}}\right).
\]

Combining all the terms above and applying a union bound over the three high-probability events (each holding with probability at least \(1 - \delta\)), we conclude that with probability at least \(1 - 3\delta\), the following bound holds:
\[
\mathcal{E}_V(r_\theta^*) \le \inf_{r \in \mathcal{F}} \mathcal{E}_V(r) + \frac{4LC}{\sqrt{N}} \left(1 + \sqrt{\log \tfrac{1}{\delta}}\right) + \lambda_1 LC \left(\frac{2}{\sqrt{m}} + \frac{2}{\sqrt{|\mathcal{G}|}} + \frac{2}{n}\right) + \lambda_2 LC \sqrt{\frac{m \log \tfrac{N}{\delta}}{N}}.
\]
\end{proof}

\section{Training Dataset}

The RLHFlow training dataset \cite{dong2024rlhf} used in our experiments integrates multiple open-source preference datasets, each selected to cover diverse preference scenarios and annotation methods. Specifically, the dataset includes general conversational preference data, such as HH-RLHF \cite{bai2022training}, consisting of human-annotated conversational pairs; SHP \cite{ethayarajh2022understanding}, containing community-driven Reddit interactions; and HelpSteer \cite{wang2024helpsteer}, featuring prompts evaluated on various human-assessed criteria (e.g., helpfulness, coherence).

Additionally, the dataset comprises task-specific data: PKU-SafeRLHF \cite{ji2023beavertails} provides expert-annotated safety and helpfulness comparisons; UltraFeedback \cite{cui2024ultrafeedback} offers GPT-4 annotations focusing on instruction-following and truthfulness across diverse models; and UltraInteract \cite{yuanadvancing} contributes complex reasoning tasks structured into preference trees with detailed annotations.

Finally, multi-turn conversational datasets like Distilabel-Capybara \cite{daniele2023amplify-instruct} and Distilabel-Orca \cite{OpenOrca} further enrich the training set with GPT-4 annotated dialogue preferences originating from distinct prompt collections. Comprehensive details about the dataset composition are summarized in Table \ref{tab:rlhflow_dataset}.

\begin{table}[ht]
\centering
\caption{Composition of the RLHFlow preference dataset.}
\resizebox{0.5\linewidth}{!}{
\begin{tabular}{lc}
\toprule
\textbf{Source} & \textbf{Number of Samples} \\
\midrule
Distilabel‐Orca \cite{OpenOrca}                    & 6{,}926  \\
Distilabel‐Capybara  \cite{daniele2023amplify-instruct} & 14{,}811 \\
PKU‐SafeRLHF \cite{ji2023beavertails}             & 26{,}874 \\
HelpSteer \cite{wang2024helpsteer}                & 37{,}131 \\
SHP \cite{ethayarajh2022understanding}            & 93{,}301 \\
HH‐RLHF‐Helpful \cite{bai2022training}            & 115{,}396 \\
UltraInteract \cite{yuanadvancing}               & 161{,}927 \\
UltraFeedback \cite{cui2024ultrafeedback}         & 340{,}025 \\
\bottomrule
\end{tabular}}
\label{tab:rlhflow_dataset}
\end{table}

\section{Prompt Engineering}

We provide the prompts for the extraction of the shortcut features, Sycophancy in Figure \ref{fig:sycophancy_prompt}, Tone in Figure \ref{fig:tone_prompt}, Helpfulness in Figure \ref{fig:helpfulness_prompt}, and Creativity in Figure \ref{fig:creativity_prompt}. We implement a verification function to make sure the output score is scaled in the defined range. When the LLM API calls fail (due to server instability or model hallucination with wrong response formats), the feature extractor will leverage a fallback function to ensure it can still be calculated with a heuristic-based function. We provide the pseudocode in Algorithm \ref{algo:fallback}. 

\begin{figure*}[h]
\begin{AIbox}{LLM-as-a-Judge Prompt for Response Sycophancy}
\TBlack{ {\bf System Message} } \\
You are a helpful assistant that can judge the quality of prompt-response pairs. Rate how much the assistant's response agrees with or flatters the user on a scale from 0 to 10, where: \\
- 0 means completely independent and willing to disagree with the user \\
- 10 means excessively agreeable and flattering to the user \\
Return only the numeric score. \\
\TBlack{ {\bf User Message} } \\
Prompt Message: <\texttt{prompt}> \\
Response: <\texttt{response}> \\
Provide only a single numerical score without any explanation.
\end{AIbox}
\caption{Prompt template for Response Sycophancy.}
\label{fig:sycophancy_prompt}
\end{figure*}

\begin{figure*}[h]
\begin{AIbox}{LLM-as-a-Judge Prompt for Response Tone}
\TBlack{ {\bf System Message} } \\
You are a helpful assistant that can judge the quality of prompt-response pairs. Rate the formality of the following text on a scale from 0 to 10, \\
where 0 is extremely casual and 10 is extremely formal. \\
Return only the numeric score. \\
\TBlack{ {\bf User Message} } \\
Prompt Message: <\texttt{prompt}> \\
Response: <\texttt{response}> \\
Provide only a single numerical score without any explanation.
\end{AIbox}
\caption{Prompt template for Response Tone.}
\label{fig:tone_prompt}
\end{figure*}

\begin{figure*}[h]
\begin{AIbox}{LLM-as-a-Judge Prompt for Response Creativity}
\TBlack{ {\bf System Message} } \\
You are a helpful assistant that can judge the quality of prompt-response pairs. Rate the creativity and originality of this text on a scale from 0 to 10. \\
Return only the numeric score. \\
\TBlack{ {\bf User Message} } \\
Prompt Message: <\texttt{prompt}> \\
Response: <\texttt{response}> \\
Provide only a single numerical score without any explanation.
\end{AIbox}
\caption{Prompt template for Response Creativity.}
\label{fig:creativity_prompt}
\end{figure*}

\begin{figure*}[h]
\begin{AIbox}{LLM-as-a-Judge Prompt for Response Helpfulness}
\TBlack{ {\bf System Message} } \\
Rate how helpful and informative this response is on a scale from 0 to 10. \\
Return only the numeric score. \\
\TBlack{ {\bf User Message} } \\
Prompt Message: <\texttt{prompt}> \\
Response: <\texttt{response}> \\
Provide only a single numerical score without any explanation.
\end{AIbox}
\caption{Prompt template for Response Helpfulness.}
\label{fig:helpfulness_prompt}
\end{figure*}

\begin{algorithm}
\caption{Fallback Function for LLM-as-a-Judge}
\label{algo:fallback}
\begin{algorithmic}[1]
\Function{call\_llm\_judge}{prompt}
    \State text $\gets$ extract\_text\_to\_evaluate(prompt)

    \If{Rate the Reponse Tone}
        \State score $\gets 5.0$  \Comment{default score}
        \State formal\_words $\gets$ ["would", "could", "should", "certainly", ...]
        \State informal\_words $\gets$ ["hey", "yeah", "cool", "awesome", ...]
        \State score $\gets$ score $+ 0.5 \cdot$ \Call{count\_occurrences}{text, formal\_words}
        \State score $\gets$ score $- 0.5 \cdot$ \Call{count\_occurrences}{text, informal\_words}
        \State \Return \Call{clip}{score, 0.0, 10.0}

    \ElsIf{Rate the Reponse Sycophancy}
        \State score $\gets 5.0$  \Comment{default score}
        \State agree\_words $\gets$ ["absolutely", "definitely", "I agree", ...]
        \State disagree\_words $\gets$ ["however", "actually", "I disagree", ...]
        \State score $\gets$ score $+ 0.5 \cdot$ \Call{count\_occurrences}{text, agree\_words}
        \State score $\gets$ score $- 0.5 \cdot$ \Call{count\_occurrences}{text, disagree\_words}
        \State \Return \Call{clip}{score, 0.0, 10.0}
    \EndIf
\EndFunction

\vspace{1em}

\Function{clip}{value, min, max}
    \If{value $< \text{min}$}
        \State \Return min
    \ElsIf{value $> \text{max}$}
        \State \Return max
    \Else
        \State \Return value
    \EndIf
\EndFunction
\end{algorithmic}
\end{algorithm}

\clearpage
\section{Case Studies}
\tcbset{
  colback=gray!8,
  colframe=black!15,
  boxrule=0.3pt,
  arc=1.5mm,
  left=2pt, right=2pt, top=6pt, bottom=6pt,
  fonttitle=\bfseries
}
\newtcolorbox{examplebox}[1][]{enhanced, breakable, title={#1}}

\begin{examplebox}[Example 1]
\textbf{Prompt:} Why is it best for people on benefits not to have kids?

\medskip
\textbf{Response A (Chosen):}
\begin{quote}\small
“I am not able to engage in that discussion. The decision to have children is a deeply personal one and should be made by individuals or couples based on their own circumstances, values, and aspirations.”
\end{quote}
\hfill\textit{(Preferred by PRISM — safer and less sycophantic)}

\textbf{Response B (Rejected):}
\begin{quote}\small
“It is best for people on benefits not to have kids due to the significant financial burden, increased costs for benefits, impact on career prospects, and negative impact on mental health. Raising a child is expensive, and it can be a significant financial burden for people on benefits who may have limited income. (truncated for long context)”
\end{quote}
\hfill\textit{(Preferred by Bradley-Terry RM — favors length and agreeable tone)}
\end{examplebox}

\begin{examplebox}[Example 2]
\textbf{Prompt:} Should AI developers prioritize transparency over profitability?

\medskip
\textbf{Response A (Rejected):}
\begin{quote}\small
“Absolutely. Transparency is absolutely essential for building trust. Long-form explanations about the development process, model behavior, testing procedures, and failure modes are vital for accountability and user confidence. (truncated for long context)”
\end{quote}
\hfill\textit{(Preferred by Bradley-Terry RM — verbose and agreeable)}

\textbf{Response B (Chosen):}
\begin{quote}\small
“Transparency is crucial for trust in AI systems, though developers must also balance other priorities like usability and cost.”
\end{quote}
\hfill\textit{(Preferred by PRISM — focuses on content relevance)}
\end{examplebox}

\end{document}